\newcommand{\beq}[1][\vspace{0.3em}]{#1\begin{equation}}
\newcommand{\eeq}{\end{equation}}
\newcommand{\bit}{\vspace{0mm}\begin{itemize}}
\newcommand{\eit}{\vspace{0mm}\end{itemize}}
\newcommand{\ben}{\vspace{0mm}\begin{enumerate}}
\newcommand{\een}{\vspace{0mm}\end{enumerate}}
\newcommand{\av}[0]{{{\bf a}}}
\newcommand{\ov}[0]{{{\bf o}}}
\newcommand{\qv}[0]{{{\bf q}}}
\newcommand{\rv}[0]{{{\bf r}}}
\newcommand{\vv}[0]{{{\bf v}}}
\newcommand{\piv}[0]{{\bm{\pi}}}
\DeclareMathOperator*{\argmin}{arg\,min}
\DeclareMathOperator*{\argmax}{arg\,max}
\newcommand{\bb}[1]{\mathbb{#1}}
\newcommand{\mb}[1]{\mbox{#1}}
\newcommand{\mc}[1]{\mathcal{#1}}
\newcommand{\vh}[0]{{{\hat{v}}}}
\newcommand{\qh}[0]{{{\hat{q}}}}
\newtheorem{prop}[theorem]{Proposition}
\begin{document}

    \title{Multi-Agent Generative Adversarial Imitation Learning}

    \author{\name Jiaming Song \email tsong@cs.stanford.edu\AND
            \name Hongyu Ren \email hyren@cs.stanford.edu\AND
            \name Dorsa Sadigh \email dorsa@cs.stanford.edu\AND
            \name Stefano Ermon \email ermon@cs.stanford.edu \\
           \addr Computer Science Department\\
           Stanford University \\
           Stanford, CA 94305-2018, USA}

    \maketitle

    \begin{abstract}
        Imitation learning algorithms can be used to learn a policy from expert demonstrations without access to a reward signal. However, most existing approaches are not applicable in multi-agent settings due to the existence of multiple (Nash) equilibria and non-stationary environments.
        We propose a new framework for multi-agent imitation learning for general Markov games, where we build upon a generalized notion of inverse reinforcement learning. We further introduce a practical multi-agent actor-critic algorithm with good empirical performance. Our method can be used to imitate complex behaviors in high-dimensional environments with multiple cooperative or competing agents.
    \end{abstract}

    \section{Introduction}
Reinforcement learning (RL) methods are becoming increasingly successful at optimizing reward signals in complex, high dimensional environments~\citep{espeholt2018impala}. 
A key limitation of RL, however, is the difficulty of designing suitable reward functions for complex and not well-specified tasks~\citep{hadfield2017inverse,amodei2016concrete}. If the reward function does not cover all important aspects of the task, the agent could easily learn undesirable behaviors~\citep{amodei2016faulty}.
This problem is further exacerbated in multi-agent scenarios, such as multiplayer games~\citep{peng2017multiagent}, multi-robot control~\citep{matignon2012coordinated} and social interactions~\citep{leibo2017multi}; in these cases, agents do not even necessarily share the same reward function, especially in competitive settings where the agents might have conflicting rewards. 

Imitation learning methods address these problems via expert demonstrations~\citep{ziebart2008maximum,englert2015inverse,finn2016guided,stadie2017third}; the agent 
directly learns desirable behaviors by imitating an expert. Notably, inverse reinforcement learning~(IRL) frameworks assume that the expert is (approximately) optimizing an underlying reward function, and attempt to recover a reward function that rationalizes the demonstrations; an agent policy is subsequently learned through RL~\citep{ng2000algorithms,abbeel2004apprenticeship}. 
Unfortunately, this paradigm is not suitable for general multi-agent settings 
due to environment being non-stationary to individual agents~\citep{lowe2017multi} and the existence of multiple equilibrium solutions~\citep{hu1998multiagent}. The optimal policy of one agent could depend on the policies of other agents, and vice versa, so there could exist multiple solutions in which each agents' policy is the optimal response to others. 

In this paper, we propose a new framework for multi-agent imitation learning -- provided with demonstrations of a set of experts interacting with each other within the same environment, we aim to learn multiple parametrized policies that imitate the behavior of each expert respectively. 
Using the framework of Markov games,
we integrate 
multi-agent RL with a suitable extension of multi-agent inverse RL. The resulting procedure strictly generalizes Generative Adversarial Imitation Learning (GAIL,~\citep{ho2016generative}) in the single agent case. 
Imitation learning corresponds to a two-player game between a generator and a discriminator. The generator controls the policies of all the agents in a distributed way, and the discriminator contains a classifier for each agent that is trained to distinguish that agent's behavior from that of the corresponding expert. Upon training, the behaviors produced by the policies are indistinguishable from the training data through the discriminator.
We can incorporate prior knowledge into the discriminators,
including the presence of cooperative or competitive agents.
In addition, we propose a novel multi-agent natural policy gradient algorithm that addresses the issue of high variance gradient estimates commonly observed in reinforcement learning~\citep{lowe2017multi,foerster2016learning}.
Empirical results demonstrate that our method can imitate complex behaviors in high-dimensional environments, such as particle environments and cooperative robotic control tasks, with multiple cooperative or competitive agents; the imitated behaviors are close to the expert behaviors with respect to ``true'' reward functions which the agents do not have access to during training. 

    \section{Preliminaries}
\subsection{Markov games}
We consider an extension of Markov decision processes (MDPs) called Markov 
games~\citep{littman1994markov}. A Markov game~(MG) for $N$ agents is defined via a set of states $\mc{S}$, $N$ sets of actions $\{\mc{A}_i\}_{i=1}^{N}$. The function $T: \mc{S} \times \mc{A}_1 \times \cdots \times \mc{A}_N \to \mc{P}(\mc{S})$ describes the (stochastic) transition process between states, where $\mc{P}(\mc{S})$ denotes the set of probability distributions over the set $\mc{S}$. Given that we are in state $s_t$ at time $t$, the agents take actions $(a_1, \ldots, a_N)$ and the state transitions to $s_{t+1}$ with probability $T(s_{t+1} | s_t, a_1, \ldots, a_N)$.

Each agent $i$ obtains a (bounded) reward given by a function $r_i: \mc{S} \times \mc{A}_1 \times \cdots \times \mc{A}_N \to \bb{R}$.
Each agent $i$ aims to maximize its own total expected return $R_i = \sum_{t=0}^{\infty} \gamma^t r_{i,t}$, where $\gamma$ is the discount factor and $T$ is the time horizon, by selecting actions through a (stationary and Markovian) 
stochastic policy $\pi_i: \mc{S} \times \mc{A}_i \to [0, 1]$. The initial states are determined by a distribution $\eta: \mc{S} \to [0, 1]$.

The joint policy is defined as $\piv(a \vert s) = \prod_{i=1}^{N} \pi_i(a_i \vert s)$, where we use bold variables without subscript $i$ to denote the concatenation of all variables for all agents
(e.g. $\piv$ denotes the joint policy $\prod_{i=1}^{N} \pi_i$ in a multi-agent setting, $\rv$ denotes all rewards, $\av$ denotes actions of all agents).

We use expectation with respect to a policy $\pi$ to denote an expectation with respect to the trajectories it generates. For example, $$\bb{E}_\pi\left[r(s, a)\right]\triangleq\bb{E}_{s_t, a_t \sim \pi}\left[\sum_{t=0}^\infty\gamma^tr(s_t, a_t)\right]$$
denotes the following sample process for the right hand side: $s_0 \sim \eta$, $a_{t} \sim \pi(a_t | s_t)$, $s_{t+1} \sim T(s_{t+1} | a_t, s_t)$, yet if we do not take expectation over the state $s$, then
$$\bb{E}_\pi\left[r(s, a) + \sum_{s' \in \mc{S}} T(s'|s, a) v(s')\right] \triangleq \bb{E}_{a \sim \piv(\cdot|s)}\left[r(s, a) + \sum_{s' \in \mc{S}} T(s'|s, a) v(s')\right]$$ assumes the policy samples only the next-step action $a$.

We use subscript $-i$ to denote \emph{all agents except for $i$}. For example, $(a_i, a_{-i})$ represents $(a_1, \ldots, a_N)$, the actions of all $N$ agents.
\subsection{Reinforcement learning and Nash equilibrium}
In reinforcement learning (RL), the goal of each agent is to maximize total expected return $\bb{E}_{\pi}[r(s, a)]$ given access to the reward signal $r$. In single agent RL, an optimal Markovian policy exists but the optimal policy might not be unique (e.g., all policies are optimal for an identically zero reward; see \citet{sutton1998reinforcement}, Chapter 3.8).
An entropy regularizer can be introduced to resolve this ambiguity. The optimal policy is found via the following RL procedure:
\begin{equation}
\mb{RL}(r) = \argmax_{\pi \in \Pi} H(\pi) + \bb{E}_\pi[r(s, a)],
\end{equation}
where $H(\pi)$ is the $\gamma$-discounted causal entropy~\citep{bloem2014infinite} of policy $\pi \in \Pi$.
\begin{definition}[$\gamma$-discounted Causal Entropy]
The $\gamma$-discounted causal entropy for a policy $\pi$ is defined as follows:
$$H(\pi) \triangleq \bb{E}_{\pi}[-\log \pi(a \vert s)] = \bb{E}_{s_t, a_t \sim \pi}\left[-\sum_{t=0}^\infty\gamma^t\log \pi(a_t| s_t)\right]$$
\end{definition}

If we scale the reward function by any positive value, the addition of $H(\pi)$ resolves ambiguity by selecting the policy among the set of optimal policies that have the highest causal entropy\footnote{For the remainder of the paper, we may use the term ``entropy'' to denote the $\gamma$-discounted causal entropy for policies.} -- the policy with both the highest reward and the highest entropy is unique because the entropy function is concave with respect to $\pi$ and the set of optimal policies is convex.  

In Markov games, however, the optimal policy of an agent depends on other agents' policies. 
One approach is to 
use 
an equilibrium solution concept, such as Nash equilibrium~\citep{hu1998multiagent}.
Informally, a set of policies $\{\pi_i\}_{i=1}^{N}$ is a Nash equilibrium if no agent can achieve higher reward by unilaterally changing its policy, i.e. $\forall i \in [1, N], \forall \hat{\pi}_i \neq \pi_i, \bb{E}_{\pi_i, \pi_{-i}}[r_i] \geq \bb{E}_{\hat{\pi}_i, \pi_{-i}}[r_i]$.
The process of finding a Nash equilibrium can be defined as a constrained optimization problem~(\citet{filar2012competitive}, Theorem 3.7.2):
\begin{gather}
\min_{\pi \in \Pi, \vv \in \bb{R}^{\mc{S} \times N}} f_r(\pi, \vv) = \sum_{i=1}^{N} \left(\sum_{s \in \mc{S}} v_i(s) - \bb{E}_{a_i \sim \pi_i(\cdot | s)} q_i(s, a_i) \right)  \label{eq:marl} \\
v_i(s) \geq q_i(s, a_i) \triangleq \bb{E}_{\pi_{-i}}\left[r_i(s, \av) + \gamma \sum_{s' \in \mc{S}} T(s'|s, \av) v_i(s')\right] \quad \forall i \in [N], s \in \mc{S}, a_i \in \mc{A}_i \label{eq:q} \\
\av \triangleq (a_i, a_{-i}) \triangleq (a_1, \ldots, a_N) \qquad \vv \triangleq [v_1; \ldots; v_N] \nonumber
\end{gather}
where the joint action $\av$ includes actions $a_{-i}$ sampled from $\pi_{-i}$ and $a_i$. Intuitively, $\vv$ could represent some estimated value function for each state and $\qv$ represents the $Q$-function that corresponds to $\vv$.
The constraints enforce the Nash equilibrium condition -- when the constraints are satisfied, $(v_i(s) - q_i(s, a_i))$ is non-negative for every $i \in [N]$. Hence $f_r(\piv, \vv)$ is always non-negative for a feasible $(\piv, \vv)$. Moreover, this objective has a global minimum of zero if a Nash equilibrium exists, and $\pi$ forms a Nash equilibrium if and only if $f_r(\piv, \vv)$ reaches zero while being a feasible solution~(\citet{prasad2015study}, Theorem 2.4).  

\subsection{Inverse reinforcement learning}
Suppose we do not have access to the reward signal $r$, but have demonstrations $\mc{D}$ provided by an expert ($N$ expert agents in Markov games). Imitation learning aims to learn policies that behave similarly to these demonstrations. 
In Markov games, we assume all experts/players operate in the same environment, and the demonstrations $\mc{D} = \{(s_j, a_j)\}_{j=1}^{M}$ are collected by sampling $s_0 \sim \eta(s), \av_t = \pi_E(\av_t \vert s_t), s_{t+1} \sim T(s_{t+1} \vert s_t, \av_t)$; we assume knowledge of $N$, $\gamma$, $\mc{S}$, $\mc{A}$, as well as access to $T$ and $\eta$ as black boxes. 
We further assume that once we obtain $\mc{D}$, we cannot ask for additional expert interactions with the environment (unlike in DAgger~\citep{ross2011reduction} or CIRL~\citep{hadfield2016cooperative}).


Let us first consider imitation in Markov decision processes (as a special case to Markov games) and the framework of single-agent Maximum Entropy IRL
\citep{ziebart2008maximum,ho2016generative}
where the goal is to recover a reward function $r$ that rationalizes the expert behavior $\pi_E$:
$$
\mb{IRL}(\pi_E) = \argmax_{r \in \bb{R}^{\mc{S} \times \mc{A}}} \bb{E}_{\pi_E}[r(s, a)] - \left(\max_{\pi \in \Pi} H(\pi) + \bb{E}_\pi[r(s, a)]\right)
$$
In practice, expectations with respect to $\pi_E$ are evaluated using samples from $\mc{D}$.

The IRL objective is ill-defined
~\citep{ng2000algorithms,finn2016guided} and there are often multiple valid solutions to the problem when we consider all $r \in \bb{R}^{\mc{S} \times \mc{A}}$. For example, we can assign the reward function for trajectories that are not visited by the expert arbitrarily so long as these trajectories yields lower rewards than the expert trajectories.
To resolve this ambiguity, \citet{ho2016generative} introduce a convex reward function regularizer $\psi: \bb{R}^{\mc{S} \times \mc{A}} \to \bb{R}$, which can be used to restrict rewards to be linear in a pre-determined set of features~\citep{ho2016generative}: 
\begin{gather}
\mb{IRL}_{\psi}(\pi_E) = \argmax_{r \in \bb{R}^{\mc{S} \times \mc{A}}}-\psi(r) + \bb{E}_{\pi_E}[r(s, a)] - \left(\max_{\pi \in \Pi} H(\pi) + \bb{E}_{\pi}[r(s, a)]\right) \label{eq:irl}
\end{gather}


\subsection{Imitation by matching occupancy measures}
\cite{ho2016generative} interpret the imitation learning problem as matching two occupancy measures, i.e., the distribution over states and actions encountered when navigating the environment with a policy.
Formally, for a policy $\pi$, it is defined as $\rho_\piv(s, a) = \piv(a \vert s) \sum_{t=0}^{\infty} \gamma^t T(s_t = s \vert \pi)$. 
\cite{ho2016generative} draw a connection between IRL and occupancy measure matching, showing that the former is a dual of the latter:
\begin{prop}[Proposition 3.1 in \citep{ho2016generative}]
\label{thm:gail}
\normalfont 
$$\mb{RL} \circ \mb{IRL}_\psi(\pi_E) = \argmin_{\pi \in \Pi} -H(\pi) + \psi^\star(\rho_\pi - \rho_{\pi_E})$$
\end{prop}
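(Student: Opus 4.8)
The plan is to reduce both the $\mb{RL}$ and $\mb{IRL}_\psi$ operators to optimization problems over occupancy measures and then apply convex duality. The first ingredient I would establish is the bijection between stationary policies $\pi \in \Pi$ and valid occupancy measures $\rho \in \mc{D}$, where $\mc{D} = \{\rho \geq 0 : \sum_a \rho(s,a) = \eta(s) + \gamma\sum_{s',a'}T(s\vert s',a')\rho(s',a')\ \forall s\}$ is the compact, convex polytope carved out by the Bellman flow constraints, with inverse map $\pi_\rho(a\vert s) = \rho(s,a)/\sum_{a'}\rho(s,a')$. Under this bijection the causal entropy becomes a strictly concave function $\bar H(\rho) = H(\pi_\rho)$ of $\rho$, and crucially every expected reward linearizes: $\bb{E}_\pi[r(s,a)] = \langle \rho_\pi,\, r\rangle$. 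This lets me rewrite the inner RL maximization appearing inside the IRL objective as $\max_{\rho \in \mc{D}}\bigl(\bar H(\rho) + \langle \rho,\, r\rangle\bigr)$.

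Next I would package the composition as a single saddle problem. Substituting the linearization into \eqref{eq:irl} and folding the negated inner maximization into a minimization, the value of $\mb{IRL}_\psi(\pi_E)$ equals $\max_r \min_{\rho \in \mc{D}} \bar L(\rho, r)$ with $\bar L(\rho, r) = -\bar H(\rho) - \psi(r) + \langle \rho_{\pi_E} - \rho,\, r\rangle$. The function $\bar L$ is concave in $r$ (since $-\psi$ is concave and the coupling term is linear) and convex in $\rho$ (since $-\bar H$ is convex and the coupling is linear), so it is a genuine concave--convex saddle function on a compact convex domain in $\rho$.

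The core step is then a minimax interchange. Since $\mc{D}$ is compact and convex, $\psi$ is a closed proper convex regularizer, and $\bar L$ has the concave--convex structure above, I would invoke Sion's minimax theorem to swap the order, $\max_r \min_{\rho \in \mc{D}} \bar L(\rho, r) = \min_{\rho \in \mc{D}} \max_r \bar L(\rho, r)$. Evaluating the inner maximization over $r$ produces the convex conjugate, $\max_r\bigl(\langle \rho_{\pi_E} - \rho,\, r\rangle - \psi(r)\bigr) = \psi^\star(\rho_{\pi_E} - \rho)$, leaving $\min_{\rho \in \mc{D}} -\bar H(\rho) + \psi^\star(\rho_{\pi_E} - \rho)$; under the cost-based convention of the quoted result this argument appears as $\rho - \rho_{\pi_E} = \rho_\pi - \rho_{\pi_E}$, matching the stated right-hand side once we translate $\rho$ back to $\pi_\rho$ through the bijection.

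The main obstacle, and where I would spend the most care, is upgrading this equality of optimal \emph{values} into the claimed equality of $\argmin/\argmax$ \emph{sets}, i.e.\ showing that the policy obtained by actually running $\mb{RL}$ on an $\mb{IRL}_\psi$-optimal reward $\tilde r$ coincides with the minimizer on the right. For this I would exploit the saddle-point characterization: any saddle point $(\rho^\star, \tilde r)$ of $\bar L$ has $\rho^\star$ as a best response to $\tilde r$, which by strict concavity of $\bar H$ over $\mc{D}$ is the \emph{unique} maximizer of $\bar H(\rho) + \langle \rho,\, \tilde r\rangle$, hence $\rho^\star = \rho_{\mb{RL}(\tilde r)}$; simultaneously $\rho^\star$ minimizes $-\bar H(\rho) + \psi^\star(\rho - \rho_{\pi_E})$. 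Matching these two descriptions of $\rho^\star$ and mapping back through the policy/occupancy bijection yields the proposition. I would need to handle the fact that $\psi$ may be nondifferentiable, so the optimal $\tilde r$ need not be unique, but the strict concavity in $\rho$ guarantees the induced policy $\pi_{\rho^\star}$ is well defined regardless of which optimal reward is chosen.
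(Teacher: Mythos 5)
Your proof is correct and takes essentially the same route as the proof this statement rests on: the paper does not reprove the proposition but quotes it from \citep{ho2016generative}, whose appendix proof uses exactly your ingredients --- the policy/occupancy-measure bijection over the Bellman-flow polytope, strict concavity of entropy as a function of $\rho$, linearization of expected reward, a convex--concave minimax interchange producing the conjugate $\psi^\star$, and the saddle-point/strict-concavity argument that upgrades equality of optimal values to equality of the $\argmin$ sets. Your extra care with the reward-versus-cost sign convention and with non-uniqueness of the optimal reward matches that original argument.
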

Here $\psi^\star(x) = \sup_{y} x^\top y - \psi(y)$ is the convex conjugate of $\psi$, which could be interpreted as a measure of similarity between the occupancy measures of expert policy and agent's policy. One instance of $\psi = \psi_{\text{GA}}$ gives rise to the Generative Adversarial Imitation Learning~(GAIL) method:
\begin{gather}
\psi^{\star}_{\text{GA}}(\rho_\pi - \rho_{\pi_E}) = \max_{D \in (0, 1)^{\mc{S} \times \mc{A}}} \bb{E}_{\pi_E}[\log(D(s, a))] + \bb{E}_{\pi}[\log (1 - D(s, a))] \label{eq:gaildisc}
\end{gather}
The resulting imitation learning method from Proposition \ref{thm:gail} involves a \emph{discriminator} (a classifier $D$) competing with a \emph{generator} (a policy $\pi$). The discriminator attempts to distinguish real vs. synthetic trajectories (produced by $\pi$) by optimizing (\ref{eq:gaildisc}). The generator, on the other hand, aims to perform optimally under the reward function defined by the discriminator, 
thus ``fooling'' the discriminator with synthetic trajectories that are difficult to distinguish from the expert ones.




    \section{Generalizing IRL to Markov games}
\label{sec:mairl}


Extending imitation learning to multi-agent settings is difficult because there are multiple rewards (one for each agent) and the notion of optimality is complicated by the need to consider an equilibrium solution~\citep{hu1998multiagent}.
We use $\mb{MARL}(r)$ to denote the set of (stationary and Markovian) policies that form a Nash equilibrium under $r$ and have the maximum 
$\gamma$-discounted causal entropy (among all equilibria):
\begin{gather}
\mb{MARL}(\rv) = \argmin_{\piv \in \Pi, \vv \in \bb{R}^{\mc{S} \times N}} f_r(\piv, \vv) - H(\piv) \label{eq:real-marl} \\
v_i(s) \geq q_i(s, a_i) \quad \forall i \in [N], s \in \mc{S}, a_i \in \mc{A}_i \nonumber
\end{gather}
where $q$ is defined as in Eq.~\ref{eq:q}. Our goal is to define a suitable inverse operator MAIRL, in analogy to IRL in Eq. \ref{eq:irl}. The key idea of Eq. \ref{eq:irl} is to choose a reward that creates a \emph{margin} between the expert and every other policy. However, the \emph{constraints} in the Nash equilibrium optimization (Eq.~\ref{eq:real-marl}) can make this challenging.
To that end, we derive an equivalent Lagrangian formulation of (\ref{eq:real-marl}), where we ``move'' the constraints into the objective function, so that we can define a margin between the expected reward of two sets of policies that captures their ``difference''. 

\subsection{Equivalent constraints via temporal difference learning}

Intuitively, the Nash equilibrium constraints imply that any agent $i$ cannot improve $\pi_i$ 
via 1-step temporal difference learning; if the condition for Equation~\ref{eq:q} is not satisfied for some $v_i$, $q_i$, and $(s, a_i)$, this would suggest that we can update the policy for agent $i$ and its value function.
Based on this notion, we can derive equivalent versions of the constraints corresponding to $t$-step temporal difference (TD) learning.
\begin{theorem}
\label{thm:tdt}
For a certain policy $\piv$ and reward $\rv$, let $\vh_i(s; \piv, \rv)$ be the unique solution to the Bellman equation:
$$
\vh_i(s; \piv, \rv) = \bb{E}_{\pi} \left[r_i(s, \av) + \gamma \sum_{s' \in \mc{S}} T(s'|s, \av) \vh_i(s'; \piv, \rv)\right] \qquad \forall s \in \mc{S}
$$
Denote $\qh_i^{(t)}(\{s^{(j)}, \av^{(j)}\}_{j=0}^{t-1}, s^{(t)}, a^{(t)}_i; \piv, \rv)$ as the discounted expected return for the $i$-th agent
conditioned on visiting the trajectory $\{s^{(j)}, \av^{(j)}\}_{j=0}^{t-1}, s^{(t)}$ in the first $t-1$ steps and 
choosing action $a^{(t)}_i$ at the $t$ step, when other agents use policy $\pi_{-i}$:
\begin{align*}
& \ \qh_i^{(t)}(\{s^{(j)}, \av^{(j)}\}_{j=0}^{t-1}, s^{(t)}, a^{(t)}_i; \piv, \rv)\\
=\ &\ \sum_{j=0}^{t-1} \gamma^{j} r_i(s^{(j)}, a^{(j)}) + \gamma^{t} \bb{E}_{\pi_{-i}}\left[r_i(s^{(t)}, \av^{(t)}) + \gamma \sum_{s' \in \mc{S}} T(s'|s, \av^{(t)}) \vh_i(s'; \piv, \rv)\right]
\end{align*}
Then $\pi$ is Nash equilibrium if and only if
\begin{gather}
    \vh_i(s^{(0)}; \piv, \rv) \geq \bb{E}_{\pi_{-i}}\left[\qh_i^{(t)}(\{s^{(j)}, \av^{(j)}\}_{j=0}^{t-1}, s^{(t)}, a^{(t)}_i; \piv, \rv)\right] \triangleq Q_i^{(t)}(\{s^{(j)}, a_i^{(j)}\}_{j=0}^{t}; \piv, \rv) \label{eq:bigq} \\
    \forall t \in \bb{N}^+, i \in [N], j \in [t], s^{(j)} \in \mc{S}, a^{(j)} \in \mc{A} \nonumber
\end{gather}
\end{theorem}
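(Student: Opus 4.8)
The plan is to reduce the multi-agent statement to a collection of single-agent optimality statements, one per agent, and then invoke the principle of optimality for finite-step deviations. First I would fix the opponents' policies $\pi_{-i}$ and observe that agent $i$ then faces an ordinary (single-agent) MDP $\mc{M}_i$ with induced reward $\bar r_i(s, a_i) = \bb{E}_{a_{-i} \sim \pi_{-i}}[r_i(s, a_i, a_{-i})]$ and induced transition $\bar T_i(s' \vert s, a_i) = \bb{E}_{a_{-i} \sim \pi_{-i}}[T(s' \vert s, a_i, a_{-i})]$. By construction $\vh_i(\cdot; \piv, \rv)$ is exactly the state-value function of $\pi_i$ in $\mc{M}_i$; the bracketed quantity $\bb{E}_{\pi_{-i}}[r_i(s, \av) + \gamma\sum_{s'}T(s'\vert s,\av)\vh_i(s')]$ appearing inside $\qh_i^{(t)}$ is the one-step action value, call it $\qh_i(s, a_i)$, of $\mc{M}_i$; and $Q_i^{(t)}$ is the expected return of the non-stationary deviation that plays the prescribed actions $a_i^{(0)}, \dots, a_i^{(t)}$ along the realized trajectory and then reverts to $\pi_i$. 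Since Nash equilibrium is by definition equivalent to $\pi_i$ being a best response in $\mc{M}_i$ for every $i$, it suffices to prove the single-agent claim: $\pi_i$ is optimal in $\mc{M}_i$ if and only if $\vh_i(s^{(0)}) \geq Q_i^{(t)}$ for all $t$, trajectories, and actions.

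For the ($\Leftarrow$) direction I would use only the case $t = 1$. From the value-function identity $\vh_i(s) = \bb{E}_{a_i \sim \pi_i}[\qh_i(s, a_i)]$, which follows directly from the defining Bellman equation, the $t=1$ constraint $\vh_i(s) \geq \qh_i(s, a_i)$ for all $a_i$ forces $\vh_i(s) = \max_{a_i} \qh_i(s, a_i)$, i.e. $\vh_i$ satisfies the Bellman optimality equation of $\mc{M}_i$. Hence $\vh_i$ is the optimal value function and $\pi_i$, which attains it, is a best response; applying this to every $i$ yields a Nash equilibrium.

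For the ($\Rightarrow$) direction I would argue by induction on $t$, using the Bellman optimality inequality $\qh_i(s, a_i) \leq \vh_i(s)$ (valid at a Nash equilibrium by the argument above) as the one-step building block. I would write the value of a $k$-step deviation recursively: its continuation after the first prescribed action is itself a $(k-1)$-step deviation value, which by the inductive hypothesis is bounded above by $\vh_i$ at the corresponding next state. Substituting this bound and taking the expectation over the sampled next states collapses one step of the deviation into a single application of $\qh_i(s, a_i) \leq \vh_i(s)$, so the bound propagates backward and telescopes to $Q_i^{(t)} \leq \vh_i(s^{(0)})$.

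The main obstacle I anticipate is the bookkeeping in the ($\Rightarrow$) induction: one must keep straight that the continuation value $\vh_i$ inside $\qh_i^{(t)}$ is the equilibrium (best-response) value rather than the value of some alternative tail policy, and that it is the expectation over the sampled next states at each of the $t$ unrolled steps that legitimizes applying the pointwise one-step inequality termwise. Once the deviation value is written in genuinely recursive form, each unrolling step is a routine application of the $t=1$ optimality inequality, and the equivalence of the whole family of constraints to the single $t=1$ constraint, hence to Nash, follows.
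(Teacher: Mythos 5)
Your reduction to per-agent best-response MDPs $\mc{M}_i$ is sound, and your ($\Leftarrow$) half is essentially correct and, in fact, tidier than the paper's: the paper's Lemma on the one-step constraints argues via value iteration and convex combinations, whereas invoking uniqueness of the solution of the Bellman optimality equation of $\mc{M}_i$ is cleaner. One repair is needed, though: the theorem's constraint family is indexed by $t \in \bb{N}^+$ over trajectories $\{s^{(j)}, a_i^{(j)}\}_{j=0}^{t}$, so the plain one-step constraint $\vh_i(s) \geq \qh_i(s,a_i)$ (which would be ``$t=0$'') is \emph{not} a member of the family. You must first recover it by averaging the $t=1$ constraint over $a_i^{(1)} \sim \pi_i(\cdot \vert s^{(1)})$ and over $s^{(1)}$ drawn from the induced transition $\bar T_i(\cdot \vert s^{(0)}, a_i^{(0)})$; this is exactly the ``convex combination over $\pi_i$'' step in the paper's own proof, and it is legitimate because the constraint holds for every $(s^{(1)}, a_i^{(1)})$.

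The genuine gap is in your ($\Rightarrow$) induction. Unrolling one step gives $Q_i^{(t)}(\{s^{(j)}, a_i^{(j)}\}_{j=0}^{t}) = \bar r_i(s^{(0)}, a_i^{(0)}) + \gamma\, Q_i^{(t-1)}(\{s^{(j)}, a_i^{(j)}\}_{j=1}^{t})$, so after applying the inductive hypothesis you are left with $\bar r_i(s^{(0)}, a_i^{(0)}) + \gamma \vh_i(s^{(1)})$ at the \emph{specific} state $s^{(1)}$ named in the constraint, not averaged over $\bar T_i(\cdot \vert s^{(0)}, a_i^{(0)})$. The one-step inequality $\qh_i \leq \vh_i$ bounds only that average; it says nothing pointwise in $s^{(1)}$, and the pointwise claim is false. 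Concretely, take $N=1$, one action everywhere, state $A$ (reward $0$) moving with probability $1/2$ each to $B$ (self-loop, reward $1$) or $C$ (self-loop, reward $0$): the unique (hence optimal, hence Nash) policy has $\vh(A) = \gamma/(2(1-\gamma))$, yet for the positive-probability trajectory $(A, a, B, a)$ one gets $Q^{(1)} = \gamma \qh(B,a) = \gamma/(1-\gamma) > \vh(A)$. Your anticipated fix --- ``taking the expectation over the sampled next states'' --- is not available, because in $Q_i^{(t)}$ as defined in the statement the states $s^{(1)}, \ldots, s^{(t)}$ are fixed, universally quantified arguments, not sampled variables: there is nothing left to average. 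Your induction does close if one reinterprets $Q_i^{(t)}$ as the \emph{unconditional} value of the deviation ``play $a_i^{(0)}, \ldots, a_i^{(t)}$, then revert to $\pi_i$,'' with next states integrated out under $\bar T_i$ at every step --- but that proves a transition-averaged family of constraints, not the pointwise-in-states family written in the theorem. Note that the paper proves this direction by a different route (contraposition: a violated constraint yields a profitable unilateral deviation ``take $a_i^{(j)}$ whenever at $s^{(j)}$''), and that argument likewise only makes sense under the averaged reading; the counterexample above shows the literal pointwise reading of the constraint cannot be equivalent to Nash. So your proof, to be completed, must explicitly adopt the averaged interpretation of $Q_i^{(t)}$, at which point each unrolling step becomes a genuine application of the one-step inequality and the telescoping goes through.
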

Intuitively, Theorem~\ref{thm:tdt} states that if we replace the 1-step constraints with $(t+1)$-step constraints, we obtain the same solution as $\mb{MARL}(r)$, since $(t+1)$-step TD updates (over one agent at a time) is still stationary with respect to a Nash equilibrium solution. So the constraints can be unrolled for $t$ steps and rewritten as $\vh_i(s^{(0)}) \geq Q_i^{(t)}(\{s^{(j)}, a_i^{(j)}\}_{j=0}^{t}; \piv, \rv)$ (corresponding to Equation~\ref{eq:bigq}).

\subsection{Multi-agent inverse reinforcement learning}
We are now ready to construct the Lagrangian dual of the primal in Equation~\ref{eq:real-marl}, using the equivalent formulation from Theorem~\ref{thm:tdt}. The first observation is that for any policy $\piv$, $f(\piv, \vh) = 0$ when $\vh$ is defined as in Theorem~\ref{thm:tdt} (see Lemma~\ref{thm:td1} in appendix). Therefore, we only need to consider the ``unrolled'' constraints from Theorem~\ref{thm:tdt}, obtaining the following dual problem
\begin{align}
    \max_{\lambda \geq 0} \min_{\pi} L^{(t+1)}_\rv(\piv, \lambda) \triangleq \sum_{i = 1}^{N} \sum_{\tau_i \in \mc{T}_i^t} \lambda(\tau_i) \left(Q_i^{(t)}(\tau_i; \piv, \rv) - \vh_i(s^{(0)}; \piv, \rv)\right)
\end{align}
where $\mc{T}_i(t)$ is the set of all length-$t$ trajectories of the form $\{s^{(j)}, a_i^{(j)}\}_{j=0}^{t}$, with $s^{(0)}$ as initial state, $\lambda$ is a vector of $N \cdot |\mc{T}_i(t)|$ Lagrange multipliers, and $\vh$ is defined as in 
Theorem~\ref{thm:tdt}. This dual formulation is a sum over agents and trajectories, which uniquely corresponds to the constraints in Equation \ref{eq:bigq}.

In the following theorem, we show that for a specific choice of $\lambda$ we can recover the difference of the sum of expected rewards between two policies, a performance gap similar to the one used in single agent IRL in Eq. (\ref{eq:irl}). This amounts to ``relaxing'' the primal problem. 

\begin{theorem}\label{thm:dual}
For any two policies $\piv^\star$ and $\piv$, let 
$$\lambda^\star_\pi(\tau_i) = \eta(s^{(0)}) \pi_i(a_i^{(0)} | s^{(0)}) \prod_{j=1}^{t} \pi_i(a_i^{(j)} | s^{(j)}) \sum_{a_{-i}^{(j-1)}}T(s^{(j)} | s^{(j-1)}, a^{(j-1)}) \pi_{-i}^\star(a_{-i}^{(j)} | s^{(j)}) $$
be the probability of generating the sequence $\tau_i$ using policy $\pi_i$ and $\pi_{-i}^\star$. Then 
\begin{equation}
\lim_{t \to \infty} L_r^{(t+1)}(\piv^\star, \lambda^\star_\piv) = \sum_{i=1}^{N} \bb{E}_{\pi_i, \pi^\star_{-i}}[r_i(s,a)] - \sum_{i=1}^{N} \bb{E}_{\pi^\star_i, \pi^\star_{-i}}[r_i(s,a)] \label{eq:thm}
\end{equation}
where $L_r^{(t+1)}(\piv^\star, \lambda^\star_\pi)$ corresponds to the dual function where the multipliers are the probability of generating their respective trajectories of length $t$.
\end{theorem}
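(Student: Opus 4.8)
The plan is to exploit the fact that the chosen multiplier $\lambda^\star_\piv(\tau_i)$ is \emph{exactly} the distribution over length-$t$ trajectories $\tau_i = \{s^{(j)}, a_i^{(j)}\}_{j=0}^{t}$ induced by letting agent $i$ follow $\pi_i$ while the opponents follow $\pi^\star_{-i}$, with the opponents' actions marginalized into the transition kernel. Consequently the weighted sum $\sum_{\tau_i \in \mc{T}_i^t} \lambda^\star_\piv(\tau_i)(\cdot)$ is literally an expectation $\bb{E}_{\pi_i, \pi^\star_{-i}}[\cdot]$. I would therefore split $L_r^{(t+1)}(\piv^\star, \lambda^\star_\piv)$ into the $Q_i^{(t)}$ term and the $\vh_i(s^{(0)})$ term, evaluate each separately, and then send $t \to \infty$.

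First I would handle the $Q_i^{(t)}$ term. Unrolling $Q_i^{(t)}$ and $\qh_i^{(t)}$ from Theorem~\ref{thm:tdt}, the per-agent weighted sum becomes
$$\bb{E}_{\pi_i, \pi^\star_{-i}}\!\left[\sum_{j=0}^{t}\gamma^j r_i(s^{(j)}, \av^{(j)}) + \gamma^{t+1}\!\sum_{s'} T(s'|s^{(t)}, \av^{(t)})\,\vh_i(s'; \piv^\star, \rv)\right].$$
Since rewards are bounded and $\gamma<1$, the finite reward sum converges to $\bb{E}_{\pi_i, \pi^\star_{-i}}[r_i(s,a)]$ while the $\gamma^{t+1}$ continuation vanishes (as $\vh_i$ is bounded), so in the limit this contributes $\sum_i \bb{E}_{\pi_i, \pi^\star_{-i}}[r_i(s,a)]$.

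Next I would handle the $\vh_i(s^{(0)})$ term. Because $\vh_i(s^{(0)}; \piv^\star, \rv)$ depends only on the initial state, I can marginalize $\lambda^\star_\piv$ over everything except $s^{(0)}$; as $\lambda^\star_\piv$ is a normalized trajectory distribution, this marginal is exactly $\eta(s^{(0)})$, so the term collapses to $\sum_i \bb{E}_{s^{(0)}\sim\eta}[\vh_i(s^{(0)}; \piv^\star, \rv)]$. By the Bellman equation defining $\vh_i$ in Theorem~\ref{thm:tdt}, $\vh_i(\cdot; \piv^\star, \rv)$ is precisely the value function of the joint policy $\piv^\star$, so its $\eta$-average equals $\bb{E}_{\pi^\star_i, \pi^\star_{-i}}[r_i(s,a)]$. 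Subtracting the two contributions gives the claimed identity.

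I expect the main obstacle to be the bookkeeping in the first step: justifying that the opponents' actions entering the transition kernel (through $\lambda^\star_\piv$) and those entering the rewards (through the inner $\bb{E}_{\pi^\star_{-i}}$ in $\qh_i^{(t)}$) may be treated as independent draws from $\pi^\star_{-i}(\cdot|s^{(j)})$ without changing the value. The resolution is that we are taking the expectation of a \emph{sum} of per-timestep rewards, so by linearity everything reduces to single-step marginals of $(s^{(j)}, a_i^{(j)}, a_{-i}^{(j)})$; the $\pi^\star_{-i}$-marginalized transition in $\lambda^\star_\piv$ induces exactly the state marginals of the true joint process $(\pi_i, \pi^\star_{-i})$, and the coupling across timesteps is irrelevant. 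The remaining checks—finiteness of the agent sum and vanishing of the discounted tail—are routine.
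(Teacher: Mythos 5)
Your proposal is correct and follows essentially the same route as the paper's proof: it splits $L_r^{(t+1)}(\piv^\star, \lambda^\star_\piv)$ into the $Q_i^{(t)}$ and $\vh_i(s^{(0)})$ terms, interprets $\lambda^\star_\piv$ as the trajectory distribution under $(\pi_i, \pi^\star_{-i})$ so the first term converges to $\bb{E}_{\pi_i, \pi^\star_{-i}}[r_i]$ as the discounted tail vanishes, and marginalizes the second term to $\bb{E}_{s^{(0)} \sim \eta}[\vh_i^\star(s^{(0)})] = \bb{E}_{\piv^\star}[r_i]$. Your explicit handling of the coupling between opponents' actions in the transition kernel versus the rewards is a more careful articulation of a step the paper passes over silently, but it is not a different argument.
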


We provide a proof in Appendix~\ref{app:dual}. Intuitively, the $\lambda^\star(\tau_i)$ weights correspond to the probability of generating trajectory $\tau_i$ when the policy is $\pi_i$ for agent $i$ and $\pi_{-i}^\star$ for the other agents. As $t \to \infty$, the first term of left hand side in Equation~\ref{eq:thm}, $\sum_{i = 1}^{N} \sum_{\tau_i \in \mc{T}_i^t} \lambda(\tau_i) Q_i^{(t)}(\tau_i)$, converges to the expected total reward  $\bb{E}_{\pi_i, \pi_{-i}^\star}[r_i]$, which is the first term of right hand side. The marginal of $\lambda^\star$ over the initial states is the initial state distribution, so the second term of left hand side, $\sum_{s} \vh(s) \eta(s)$, converges to $\bb{E}_{\pi^\star_i, \pi^\star_{-i}}[r_i]$, which is the second term of right hand side. Thus, the left hand side and right hand side of Equation~\ref{eq:thm} are the same as $t \to \infty$. 

Theorem \ref{thm:dual} motivates the following definition of multi-agent IRL with regularizer $\psi$.
\begin{align}
    \mb{MAIRL}_{\psi}(\piv_E) = \argmax_{\rv} -\psi(\rv) + \sum_{i=1}^{N} (\bb{E}_{\piv_E}[r_i]) - \left(\max_\piv \sum_{i=1}^{N} (\beta H_i(\pi_i) +  \bb{E}_{\pi_i, \pi_{E_{-i}}}[r_i]) \right),
\end{align}
where $H_i(\pi_i) = \bb{E}_{\pi_i, \pi_{E_{-i}}}[-\log \pi_i(a_i | s)]$ 
is the discounted causal entropy for policy $\pi_i$ when other agents follow $\pi_{E_{-i}}$, and $\beta$ is a hyper-parameter controlling the strength of the entropy regularization term as in~\citep{ho2016generative}. This formulation is a strict generalization to the single agent IRL in~\citep{ho2016generative}.
\begin{corollary}
If $N = 1$, $\beta=1$ then {\normalfont $\mb{MAIRL}_\psi(\piv_E) = \mb{IRL}_\psi(\piv_E)$}.
\end{corollary}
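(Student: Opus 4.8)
The plan is to prove the corollary by direct substitution, showing that every term in the $\mb{MAIRL}_\psi$ objective collapses to its single-agent counterpart when $N=1$ and $\beta=1$. First I would set $N=1$ in the definition of $\mb{MAIRL}_\psi(\piv_E)$, so that each of the two sums $\sum_{i=1}^{N}$ contains a single term indexed by $i=1$. Writing $r \triangleq r_1$ and $\pi \triangleq \pi_1$, the regularizer $-\psi(\rv)$ becomes $-\psi(r)$, and the expert term $\sum_{i=1}^{N}\bb{E}_{\piv_E}[r_i]$ becomes $\bb{E}_{\pi_E}[r(s,a)]$, which matches the corresponding term of $\mb{IRL}_\psi(\pi_E)$ in Equation~\ref{eq:irl}.

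Next I would handle the inner maximization. With $N=1$ there is no agent other than agent $1$, so the index set $-i$ is empty and the conditioning policy $\pi_{E_{-i}}$ is vacuous. Consequently the joint maximization $\max_\piv$ reduces to $\max_{\pi \in \Pi}$, and the joint expectation $\bb{E}_{\pi_1,\pi_{E_{-1}}}[\,\cdot\,]$ reduces to the ordinary single-agent expectation $\bb{E}_{\pi}[\,\cdot\,]$ over the trajectories generated by $\pi$ alone. Applying this to the reward term gives $\bb{E}_{\pi_1,\pi_{E_{-1}}}[r_1]=\bb{E}_\pi[r(s,a)]$, and applying it to the entropy definition gives $H_1(\pi_1)=\bb{E}_{\pi_1,\pi_{E_{-1}}}[-\log\pi_1(a_1\mid s)]=\bb{E}_\pi[-\log\pi(a\mid s)]=H(\pi)$, precisely the $\gamma$-discounted causal entropy of the single-agent definition.

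Finally I would set $\beta=1$, so that $\beta H_1(\pi_1)=H(\pi)$, and assemble the pieces: the inner maximization becomes $\max_{\pi\in\Pi} H(\pi)+\bb{E}_\pi[r(s,a)]$, and the full objective reads $\argmax_{r} -\psi(r)+\bb{E}_{\pi_E}[r(s,a)]-\left(\max_{\pi\in\Pi}H(\pi)+\bb{E}_\pi[r(s,a)]\right)$, which is exactly $\mb{IRL}_\psi(\pi_E)$. Since the two objective functions coincide pointwise in $r$, their $\argmax$ sets are equal, establishing the claim. I do not anticipate a genuine obstacle; the one point requiring explicit care is the reduction of the multi-agent expectation $\bb{E}_{\pi_i,\pi_{E_{-i}}}$ to the single-agent $\bb{E}_\pi$ when the complementary agent set is empty, which must be stated so that both the reward and the entropy terms align cleanly with the single-agent definitions.
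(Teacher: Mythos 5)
Your proof is correct and follows exactly the route the paper intends: the paper states this corollary without a separate proof, treating it as immediate from substituting $N=1$ and $\beta=1$ into the $\mb{MAIRL}_\psi$ definition, which is precisely your direct-substitution argument. Your explicit care with the reduction of $\bb{E}_{\pi_1,\pi_{E_{-1}}}$ and $H_1(\pi_1)$ to their single-agent counterparts $\bb{E}_\pi$ and $H(\pi)$ fills in the only detail the paper leaves implicit.
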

Furthermore, if the regularization $\psi$ is additively separable, and for each agent $i$, $\pi_{E_i}$ is the unique optimal response to other experts $\pi_{E_{-i}}$, we obtain the following:
\begin{theorem}\label{thm:om}
Assume that $\psi(\rv) = \sum_{i=1}^{N} \psi_i(r_i)$, $\psi_i$ is convex for each $i \in [N]$, 
and that {\normalfont $\mb{MARL}(r)$} has a unique solution\footnote{The set of Nash equilibria is not always convex, so we have to assume $\mb{MARL}(r)$ returns a unique solution.} for all {\normalfont $r \in \mb{MAIRL}_\psi (\pi_E)$}, then {\normalfont
$$\mb{MARL} \circ \mb{MAIRL}_\psi (\pi_E) = \argmin_{\pi \in \Pi} \sum_{i=1}^{N} -\beta H_i(\pi_i) + \psi_i^\star(\rho_{\pi_{i, E_{-i}}} - \rho_{\pi_{E}})$$ }
where $\pi_{i, E_{-i}}$ denotes $\pi_i$ for agent $i$ and $\pi_{E_{-i}}$ for other agents.
\end{theorem}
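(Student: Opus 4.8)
The plan is to exploit the additive separability of $\psi$ to reduce the multi-agent statement to $N$ independent single-agent claims, each an instance of Proposition~\ref{thm:gail}, and then to glue the single-agent solutions back together into a Nash equilibrium using the uniqueness hypothesis.

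First I would observe that both $\mb{MAIRL}_\psi(\pi_E)$ and the right-hand side decouple across agents. In the $\mb{MAIRL}_\psi$ objective the regularizer splits as $\sum_i \psi_i(r_i)$, and the inner maximization $\max_\piv \sum_i(\beta H_i(\pi_i) + \bb{E}_{\pi_i, \pi_{E_{-i}}}[r_i])$ splits into $N$ independent maximizations, since the $i$-th summand depends on $\piv$ only through $\pi_i$ (all other agents are pinned to the experts $\pi_{E_{-i}}$). Because each summand also depends on $\rv$ only through $r_i$, the outer $\max_\rv$ factors as well, so $\mb{MAIRL}_\psi(\pi_E)$ reduces to $N$ separate problems, the $i$-th being a single-agent $\beta$-entropy-regularized $\mb{IRL}_{\psi_i}$ for agent $i$ in the stationary MDP obtained by fixing the others at $\pi_{E_{-i}}$; denote its solution $r_i^\star$. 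Symmetrically, the right-hand side decouples because $\rho_{\pi_{i,E_{-i}}}$ depends only on $\pi_i$, so $\argmin_\piv \sum_i[-\beta H_i(\pi_i) + \psi_i^\star(\rho_{\pi_{i,E_{-i}}} - \rho_{\pi_E})]$ is the product over $i$ of the single-agent minimizers.

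Next I would apply Proposition~\ref{thm:gail} agentwise. Fixing $\pi_{E_{-i}}$ turns agent $i$'s sub-problem into a genuine single-agent MDP, with induced transition kernel and effective reward $\bb{E}_{a_{-i}\sim\pi_{E_{-i}}}[r_i(s,\cdot)]$, so the single-agent duality between $\mb{RL}\circ\mb{IRL}_\psi$ and occupancy matching applies verbatim (carrying the entropy weight $\beta$). This gives, for each $i$,
$$\mb{RL}_i\big(r_i^\star\big) = \argmin_{\pi_i} -\beta H_i(\pi_i) + \psi_i^\star(\rho_{\pi_{i,E_{-i}}} - \rho_{\pi_E}),$$
where $\mb{RL}_i$ denotes agent $i$'s best response to $\pi_{E_{-i}}$. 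Taking the product over $i$ identifies the right-hand side of the theorem with the profile $\tilde\piv = (\mb{RL}_1(r_1^\star), \ldots, \mb{RL}_N(r_N^\star))$.

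The remaining and hardest step is to show $\mb{MARL}(\rv^\star) = \tilde\piv$, i.e. that the max-entropy Nash equilibrium under the recovered reward coincides with the agentwise best-response profile. The subtlety is that $\mb{MARL}$ requires each $\tilde\pi_i$ to be optimal against $\tilde\pi_{-i}$, whereas the reduction above only guarantees optimality against the experts $\pi_{E_{-i}}$. I would close this gap by arguing that $\tilde\piv$ is itself a Nash equilibrium of $\rv^\star$: the recovered reward $r_i^\star$ rationalizes agent $i$'s equilibrium response, and under the hypothesis that each $\pi_{E_i}$ is the unique optimal response to $\pi_{E_{-i}}$ the individual best-response maps are mutually consistent, so no unilateral deviation is profitable. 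The uniqueness hypothesis on $\mb{MARL}(\rv^\star)$ then forces $\mb{MARL}(\rv^\star)$ to be exactly this single equilibrium, completing the chain of equalities. I expect this last step — verifying the equilibrium/consistency condition and using uniqueness to exclude alternative Nash equilibria — to be the main obstacle, since it is precisely where the agentwise single-agent reduction must be reconciled with the coupled Nash structure that $\mb{MARL}$ enforces.
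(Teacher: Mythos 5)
Your proposal is correct and follows essentially the same route as the paper's own proof: decouple $\mb{MAIRL}_\psi$ and the occupancy-matching objective agentwise using the additive separability of $\psi$, apply Proposition~\ref{thm:gail} to each agent's single-agent problem with the other agents pinned to $\pi_{E_{-i}}$, and then use the unique-best-response and $\mb{MARL}$-uniqueness assumptions to identify the profile of agentwise solutions with the Nash equilibrium returned by $\mb{MARL}$. The only difference is one of emphasis: the step you flag as the main obstacle (reconciling agentwise best responses with the coupled Nash structure) is precisely the step the paper dispatches in a single sentence, so your treatment is, if anything, more explicit about where the uniqueness hypotheses do their work.
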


The above theorem suggests that $\psi$-regularized multi-agent inverse reinforcement learning is seeking, for each agent $i$, a policy whose occupancy measure is close to one where we replace policy $\pi_i$ with expert $\pi_{E_i}$, as measured by the convex function $\psi_i^\star$.

However, we do not assume access to the expert policy $\pi_E$ during training, so it is not possible to obtain $\rho_{\pi_{i, E_{-i}}}$. In the settings of this paper, we consider an alternative approach where we match the occupancy measure between $\rho_{\pi_E}$ and $\rho_{\pi}$ instead. We can obtain our practical algorithm if we select an adversarial reward function regularizer and remove the effect from entropy regularizers.
\begin{prop}\label{thm:magail}
If $\beta=0$, and $\psi(\rv) = \sum_{i=1}^{N} \psi_{i}(r_i)$ where $\psi_{i}(r_i) = \bb{E}_{\pi_E}[g(r_i)]$ if $r_i > 0$; $+\infty$ otherwise, and $$
g(x) = \left\{
                \begin{array}{ll}
                  -x - \log (1 - e^x) & \text{if}\quad r_i > 0\\
                 +\infty & \text{otherwise}\\
                \end{array} \right.
$$
then
$$
\argmin_\pi \sum_{i=1}^{N} \psi_{i}^\star (\rho_{\pi_i, \pi_{E_{-i}}} - \rho_{\pi_E}) = \argmin_\pi \sum_{i=1}^{N} \psi_{i}^\star (\rho_{\pi_i,\pi_{-i}} - \rho_{\pi_E}) = \pi_E
$$
\end{prop}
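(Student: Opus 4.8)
The plan is to reduce the proposition to the single-agent GAIL analysis applied termwise, and then to exploit how the two objectives differ only in the way the occupancy measures depend on the joint policy. The first step I would take is to compute the convex conjugate $\psi_i^\star$ of the prescribed regularizer. Since $\psi_i(r_i) = \bb{E}_{\pi_E}[g(r_i)]$ with the given $g$, the same conjugate calculation as for $\psi_{\mathrm{GA}}$ in \citet{ho2016generative} gives, for any occupancy measure $\rho$,
$$\psi_i^\star(\rho - \rho_{\pi_E}) = \max_{D_i \in (0,1)^{\mc{S} \times \mc{A}}} \bb{E}_{\pi_E}[\log D_i(s,a)] + \bb{E}_{\rho}[\log(1 - D_i(s,a))],$$
with pointwise optimizer $D_i^\star = \rho_{\pi_E}/(\rho_{\pi_E} + \rho)$. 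Because every valid occupancy measure has the same total mass $1/(1-\gamma)$, plugging $D_i^\star$ back in shows this maximum equals, up to a positive scale and additive constant, the Jensen--Shannon divergence between the normalized versions of $\rho$ and $\rho_{\pi_E}$; it is therefore nonnegative on the feasible set and vanishes exactly at $\rho = \rho_{\pi_E}$. Hence $\rho \mapsto \psi_i^\star(\rho - \rho_{\pi_E})$ is \emph{uniquely} minimized over occupancy measures at $\rho = \rho_{\pi_E}$, and I would record this as a lemma, since both equalities rest on it.

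Next I would settle the rightmost equality. In the second objective the $i$-th conjugate takes argument $\rho_{\pi_i, \pi_{-i}}$; but $(\pi_i, \pi_{-i})$ is literally the joint policy $\pi$ for every $i$, so all $N$ summands share the single argument $\rho_\pi - \rho_{\pi_E}$. Hence $\sum_{i} \psi_i^\star(\rho_{\pi_i, \pi_{-i}} - \rho_{\pi_E}) = \sum_i \psi_i^\star(\rho_\pi - \rho_{\pi_E})$, and by the lemma each term, and therefore the sum, is minimized precisely when $\rho_\pi = \rho_{\pi_E}$. Using the bijection between policies and valid occupancy measures (the standard MDP result applied to the joint policy), $\rho_\pi = \rho_{\pi_E}$ forces $\pi = \pi_E$ on the visited states, so the unique minimizer of the second objective is $\pi_E$.

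Then I would treat the leftmost objective. Here the $i$-th argument $\rho_{\pi_i, \pi_{E_{-i}}}$ depends on $\pi_i$ alone, since the other agents are fixed to their experts; consequently the sum decouples and can be minimized agent by agent. Fixing $\pi_{E_{-i}}$ turns agent $i$'s decision problem into a single-agent MDP in which $\rho_{\pi_i, \pi_{E_{-i}}}$ is agent $i$'s occupancy measure, so by the lemma the $i$-th term is uniquely minimized at $\rho_{\pi_i, \pi_{E_{-i}}} = \rho_{\pi_E}$, and by the occupancy-measure bijection in this induced MDP this holds iff $\pi_i = \pi_{E_i}$. Carrying this out for every $i$ gives $\pi = \pi_E$ as the unique minimizer of the first objective. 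Since both argmins equal $\pi_E$, the entire chain of equalities follows.

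The conjugate computation is routine and inherited from GAIL; I expect the main obstacle to be the occupancy-measure bijection in the multi-agent setting. The key is to argue that holding the other agents fixed---to $\pi_{E_{-i}}$ in the first objective, or recognizing the common joint occupancy in the second---collapses the relevant dependence onto a single, well-defined MDP occupancy measure, so that equality of occupancy measures can be promoted to equality of policies. Two further points need care: the recovered policy equality is only guaranteed on states visited with positive probability (the usual caveat of occupancy matching), and the uniqueness needed to identify each argmin as the single policy $\pi_E$, rather than a set, comes exactly from the strictness of the Jensen--Shannon minimum at $\rho = \rho_{\pi_E}$ together with the injectivity of $\pi \mapsto \rho_\pi$.
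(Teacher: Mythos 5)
Your proof is correct and follows essentially the same route as the paper's: invoke the GAIL conjugate computation (Corollary A.1.1 of Ho and Ermon) to identify $\psi_i^\star$ with a shifted and scaled Jensen--Shannon divergence, conclude that each term is uniquely minimized when the occupancy measures coincide, and observe that both objectives are therefore minimized exactly at $\pi_E$. Your write-up is more detailed than the paper's---the per-agent decoupling of the left objective, the reduction to an induced single-agent MDP, and the occupancy-measure bijection are made explicit rather than left implicit---but the underlying argument is the same.
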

Theorem~\ref{thm:om} and Proposition~\ref{thm:magail} discuss the differences from the single agent scenario. On the one hand, in Theorem~\ref{thm:om} we make the assumption that $\mb{MARL}(\rv)$ has a unique solution, which is always true in the single agent case due to convexity of the space of the optimal policies. On the other hand, in Proposition~\ref{thm:magail} we remove the entropy regularizer because here the causal entropy for $\pi_i$ may depend on the policies of the other agents, so the entropy regularizer on two sides are not the same quantity. Specifically, the entropy for the left hand side conditions on $\pi_{E_{-i}}$ and the entropy for the right hand side conditions on $\pi_{-i}$ (which would disappear in the single-agent case).

    \section{Practical multi-agent imitation learning}
\definecolor{disc}{cmyk}{0.60, 0.0, 0.60, 0.20}
\definecolor{expert}{cmyk}{0.70, 0.15, 0.00, 0.20}
\definecolor{policy}{cmyk}{0.0, 0.45, 0.60, 0.20}
\definecolor{env}{cmyk}{0.30, 0.60, 0.00, 0.20}

Despite the recent successes in deep RL, it is notoriously hard to train policies with RL algorithms
because of high variance gradient estimates. This is further exacerbated in Markov games since an agent's optimal policy depends on other agents~\citep{lowe2017multi,foerster2016learning}.
In this section, we address these problems and propose practical algorithms for multi-agent imitation. 

\subsection{Multi-agent generative adversarial imitation learning}

We select $\psi_i$ to be our reward function regularizer in Proposition~\ref{thm:magail}; 
this corresponds to the two-player game introduced in Generative Adversarial Imitation Learning (GAIL,~\citep{ho2016generative}).
For each agent $i$, we have a discriminator (denoted as $D_{\omega_i}$) mapping state action-pairs to \emph{scores} optimized to discriminate expert demonstrations from behaviors produced by $\pi_i$. Implicitly, $D_{\omega_i}$ plays the role of a reward function for the generator, which in turn attempts to train the agent to maximize its reward thus fooling the discriminator. We optimize the following objective:
\begin{gather}
\min_\theta \max_{\omega}\mathbb{E}_{\piv_\theta}\left[\sum_{i=1}^{N} \log D_{\omega_i}(s, a_i)\right] + \mathbb{E}_{\piv_E}\left[\sum_{i=1}^{N} \log (1 - D_{\omega_i}(s, a_i))\right] \label{eq:magail:adv}
\end{gather}
We update $\pi_\theta$ through reinforcement learning, where we also use a baseline $V_\phi$ to reduce variance.
We outline the algorithm -- Multi-Agent GAIL (MAGAIL) -- in Appendix~\ref{app:magail}.
We can augment the reward regularizer $\psi(r)$ using an indicator $y(r)$ denoting whether $r$ fits our prior knowledge; the augmented reward regularizer $\hat{\psi}: \bb{R}^{\mc{S} \times \mc{A}} \to \bb{R} \cup \{\infty\}$ is then: $\psi(r)$ if $y(r) = 1$ and $\infty$ if $y(r) = 0$.
%
%
We introduce three types of $y(r)$ for common settings.

\paragraph{Centralized} The easiest case is to assume that the agents are fully cooperative, i.e. they share the same reward function. Here $y(r) = \bb{I}(r_1 = r_2 = \ldots r_n)$ and $\psi(r) = \psi_{\text{GA}}(r)$. One could argue this corresponds to the GAIL case, where the RL procedure operates on multiple agents (a joint policy). 

\paragraph{Decentralized} We make no prior assumptions over the correlation between the rewards. 
Here $y(r) = \bb{I}(r_i \in \bb{R}^{\mc{O}_i \times \mc{A}_i})$ and $\psi_i(r_i) = \psi_{\text{GA}}(r_i)$. This corresponds to one discriminator for each agent which discriminates the trajectories as observed by agent $i$. However, these discriminators are not learned independently as they interact indirectly via the environment. 

\paragraph{Zero Sum} Assume there are two agents that receive opposite rewards, so $r_1 = -r_2$. 
As such, $\psi$ is no longer additively separable.
Nevertheless, an adversarial training procedure can be designed using the following fact:
\begin{gather}
v(\pi_{E_1}, \pi_2) \geq v(\pi_{E_1}, \pi_{E_2}) \geq v(\pi_1, \pi_{E_2}) \nonumber
\end{gather}
where $v(\pi_1, \pi_2) = \bb{E}_{\pi_1, \pi_2}[r_1(s, a)]$ is the expected outcome for agent 1. The discriminator could maximize the reward for trajectories in $(\pi_{E_1}, \pi_2)$ and minimize the reward for trajectories in $(\pi_2, \pi_{E_1})$.

\begin{figure}[t]
    \centering
    \begin{subfigure}{0.30\linewidth}
    \includegraphics[width=\textwidth]{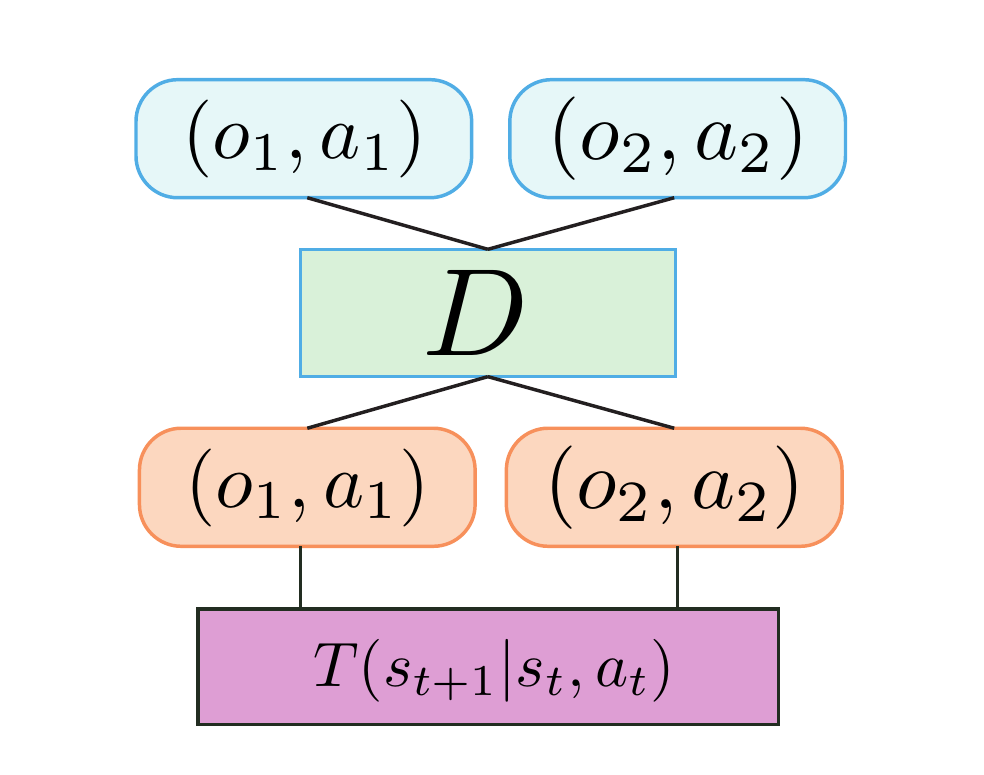}
    \caption{Centralized (Cooperative)}
    \end{subfigure}
    ~
    \begin{subfigure}{0.30\linewidth}
    \includegraphics[width=\textwidth]{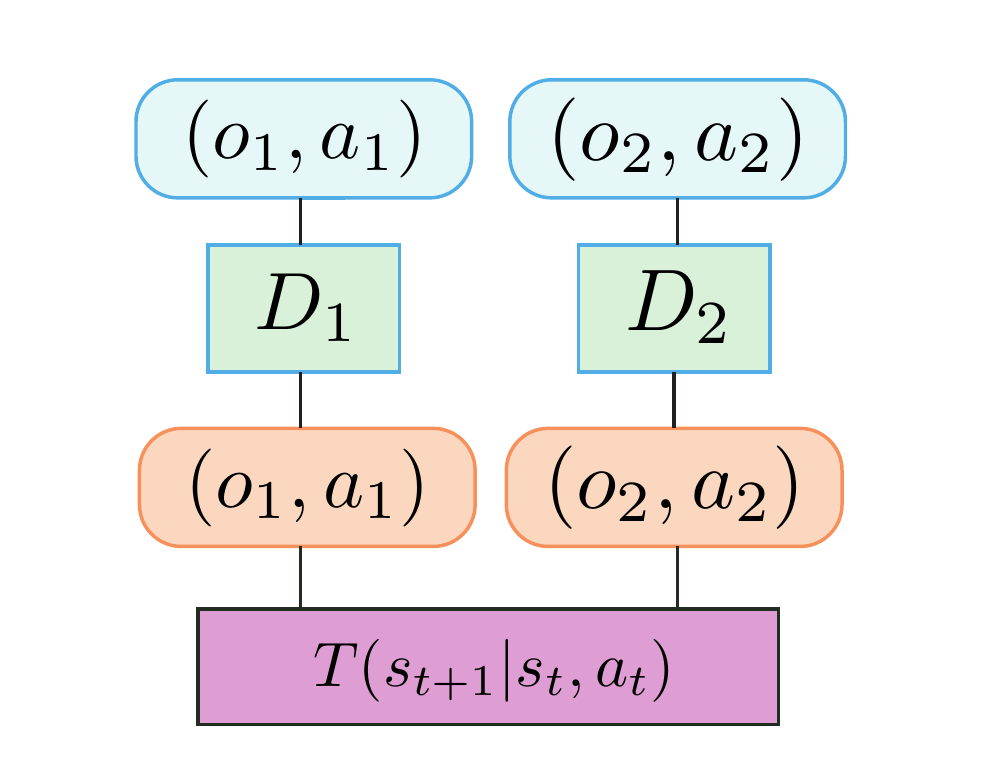}
    \caption{Decentralized (Mixed)}
    \end{subfigure}
    ~
    \begin{subfigure}{0.30\linewidth}
    \includegraphics[width=\textwidth]{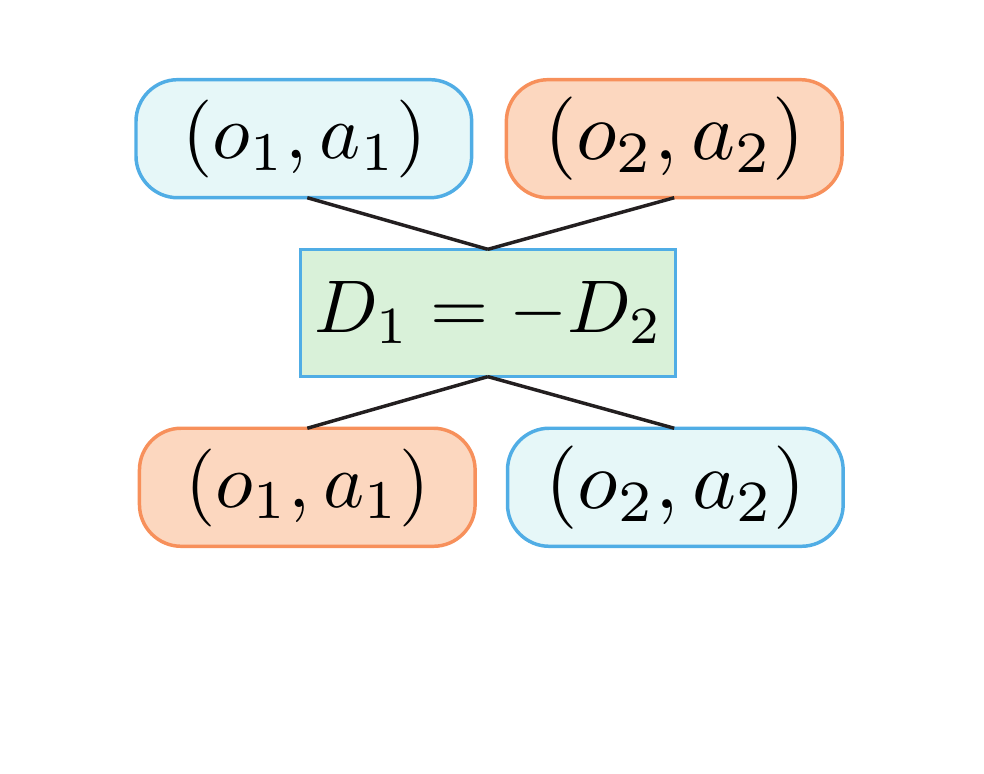}
    \caption{Zero-sum (Competitive)}
    \end{subfigure}
    \caption{Different MAGAIL algorithms obtained with different priors on the reward structure. The \textcolor{disc}{discriminator} tries to assign higher rewards to top row and low rewards to bottom row. In centralized and decentralized, the \textcolor{policy}{policy} operates with the \textcolor{env}{environment} to match the \textcolor{expert}{expert} rewards. In zero-sum, the \textcolor{policy}{policy} do not interact with the \textcolor{env}{environment}; \textcolor{expert}{expert} and \textcolor{policy}{policy} trajectories are paired together as input to the \textcolor{disc}{discriminator}.}
    \label{fig:priors}
\end{figure}
These three settings are in summarized in Figure \ref{fig:priors}.
\subsection{Multi-agent actor-critic with Kronecker factors}
To optimize over the generator parameters $\theta$ in Eq. (\ref{eq:magail:adv})
we wish to use an algorithm for multi-agent RL that has good sample efficiency in practice. 
Our algorithm, which we refer to as Multi-agent Actor-Critic with Kronecker-factors (MACK), is based on Actor-Critic with Kronecker-factored Trust Region (ACKTR,~\citep{wu2017scalable}), a state-of-the-art natural policy gradient~\citep{amari1998natural,kakade2002natural} method in deep RL. 
MACK uses the framework of centralized training with decentralized execution~\citep{foerster2016learning}; policies are trained with additional information to reduce variance but such information is not used during execution time. 
We let the advantage function of every agent agent be a function of all agents' observations and actions:
\begin{align}
A_{\phi_i}^{\pi_{i}}(s, a_t) = \sum_{j=0}^{k-1} &(\gamma^j r(s_{t+j}, a_{t+j}) + \gamma^k V_{\phi_i}^{\pi_i}(s_{t+k}, a_{-i, t})) V_{\phi_i}^{\pi_i}(s_{t}, a_{-i, t})
\end{align}
where $V_{\phi_i}^{\pi_i}(s_k, a_{-i})$ is the baseline for $i$, utilizing the additional information $(a_{-i})$ for variance reduction. 
We use (approximated) natural policy gradients to update both $\theta$ and $\phi$ but without trust regions to schedule the learning rate -- a linear decay learning rate schedule achieves similar empirical performance.

MACK has some notable differences from Multi-Agent Deep Deterministic Policy Gradient~\citep{lowe2017multi}. On the one hand, MACK does not assume knowledge of other agent's policies nor tries to infer them; the value estimator merely collects experience from other agents (and treats them as black boxes). On the other hand, MACK does not require gradient estimators such as Gumbel-softmax~\citep{jang2016categorical,maddison2016concrete} to optimize over discrete actions, which is necessary for DDPG~\citep{lillicrap2015continuous}. 
    \section{Experiments}
We evaluate the performance of (centralized, decentralized, and zero-sum versions) of MAGAIL under two types of environments. One is a particle environment which allows for complex interactions and behaviors; the other is a control task, where multiple agents try to cooperate and move a plank forward. We collect results by averaging over 5 random seeds. Our implementation is based on OpenAI baselines~\citep{baselines}; please refer to Appendix~\ref{app:exps} for implementation details. 

We compare our methods (centralized, decentralized, zero-sum MAGAIL) with two baselines. The first is behavior cloning (BC), which learns a maximum likelihood estimate for $a_i$ given each state $s$ and does not require actions from other agents. The second baseline is the GAIL IRL baseline that operates on each agent separately -- for each agent we first pretrain the other agents with BC, and then train the agent with GAIL; we then gather the trained GAIL policies from all the agents and evaluate their performance.

\subsection{Particle environments}

We first consider the particle environment proposed in~\citep{lowe2017multi},
which consists of several agents and landmarks. 
We consider two cooperative environments and two competitive ones. All environments have an underlying true reward function that allows us to evaluate the performance of learned agents. 

The environments include: \textbf{Cooperative Communication} -- two agents must cooperate to reach one of three colored landmarks. One agent (``speaker'') knows the goal but cannot move, so it must convey the message to the other agent (``listener'') that moves but does not observe the goal. \textbf{Cooperative Navigation} -- three agents must cooperate through physical actions to reach three landmarks; ideally, each agent should cover a single landmark. \textbf{Keep-Away} -- two agents have contradictory goals, where agent 1 tries to reach one of the two targeted landmarks, while agent 2 (the adversary) tries to keep agent 1 from reaching its target. The adversary does not observe the target, so it must act based on agent 1's actions. \textbf{Predator-Prey} -- three slower cooperating adversaries must chase the faster agent in a randomly generated environment with obstacles; the adversaries are rewarded by touching the agent while the agent is penalized.

For the cooperative tasks, we use an analytic expression defining the expert policy; for the competitive tasks, we use MACK to train expert policies based on the true underlying rewards (using larger policy and value networks than the ones that we use for imitation). We then use the expert policies to simulate trajectories $\mathcal{D}$, and then do imitation learning on $\mathcal{D}$ as demonstrations, where we assume the underlying rewards are unknown. 
Following~\citep{li17infogail}, we pretrain our Multi-Agent GAIL methods and the GAIL baseline using behavior cloning as initialization to reduce sample complexity for exploration. We consider 100 to 400 episodes of expert demonstrations, each with 50 timesteps, which is close to the amount of timesteps used for the control tasks in~\cite{ho2016generative}. Moreover, we randomly sample the starting position of agent and landmarks each episode, so our policies have to learn to generalize when they encounter new settings.

\subsubsection{Cooperative tasks}
We evaluate performance in cooperative tasks via the average expected reward obtained by all the agents in an episode. In this environment, the starting state is randomly initialized, so generalization is crucial. We do not consider the zero-sum case, since it violates the cooperative nature of the task.
We display the performance of centralized, decentralized, GAIL and BC in Figure~\ref{fig:comp}.

Naturally, the performance of BC and MAGAIL increases with more expert demonstrations. MAGAIL performs consistently better than BC in all the settings; interestingly, in the cooperative communication task, centralized MAGAIL is able to achieve expert-level performance with only 200 demonstrations, but BC fails to come close even with 400 trajectories. Moreover, the centralized MAGAIL performs slightly better than decentralized MAGAIL due to the better prior, but decentralized MAGAIL still learns a highly correlated reward between two agents. 

\begin{figure}
\centering
\includegraphics[width=\textwidth]{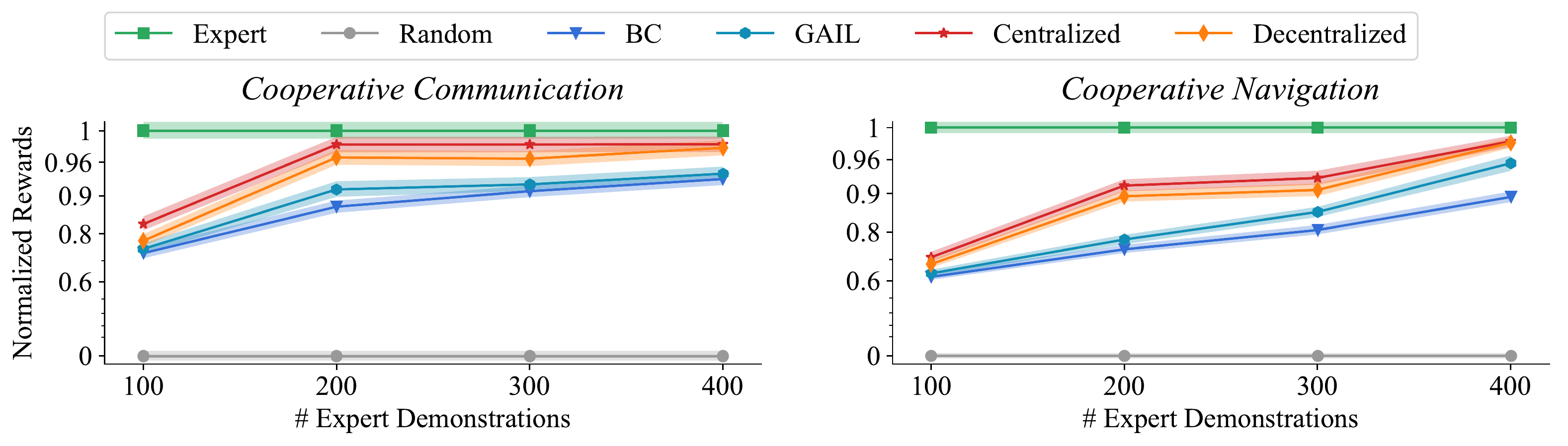}
\caption{Average true reward from cooperative tasks. Performance of experts and random policies are normalized to one and zero respectively. We use inverse log scale for better comparison.}
\label{fig:comp}
\end{figure}

\begin{table}
\centering
\caption{Average agent rewards in competitive tasks. We compare behavior cloning (BC), GAIL (G), Centralized (C), Decentralized (D), and Zero-Sum (ZS) methods. Best marked in bold (high vs. low rewards is preferable depending on the agent vs. adversary role).
}
\begin{tabular}{c|ccccc|cccc}
\hline
  Task  &  \multicolumn{9}{c}{Predator-Prey}\\\hline
    Agent  & \multicolumn{5}{c|}{Behavior Cloning} & G & C & D & ZS\\
    Adversary & BC & G & C & D & ZS & \multicolumn{4}{c}{Behavior Cloning} \\\hline
    Rewards & -93.20 & -93.71 & -93.75 & -95.22 & \textbf{-95.48} & -90.55 & -91.36 & \textbf{-85.00} & -89.4 \\\hline
    \hline
    Task  &  \multicolumn{9}{c}{Keep-Away}\\\hline
    Agent  & \multicolumn{5}{c|}{Behavior Cloning} & G & C & D & ZS\\
    Adversary & BC & G & C & D & ZS & \multicolumn{4}{c}{Behavior Cloning} \\\hline
    Rewards & 24.22 & 24.04 & 23.28 & 23.56 & \textbf{23.19} & 26.22 & 26.61 & \textbf{28.73} & 27.80 \\\hline
\end{tabular}
\label{tab:competitive}
\end{table}


\subsubsection{Competitive tasks}
We consider all three types of Multi-Agent GAIL (centralized, decentralized, zero-sum) and BC in both competitive tasks. Since there are two opposing sides, it is hard to measure performance directly. Therefore, we compare by letting (agents trained by) BC play against (adversaries trained by) other methods, and vice versa.
From Table~\ref{tab:competitive}, decentralized and zero-sum MAGAIL often perform better than centralized MAGAIL and BC, which suggests that the selection of the suitable prior $\hat{\psi}$ is important for good empirical performance. More details for all the particle environments are in the appendix. 

\subsection{Cooperative control}
In some cases we are presented with sub-optimal expert demonstrations because the environment has changed; we consider this case in a cooperative control task~\citep{madrl}, where $N$ bipedal walkers cooperate to move a long plank forward; the agents have incentive to collaborate since the plank is much longer than any of the agents. The expert demonstrates its policy on an environment with no bumps on the ground and heavy weights, while we perform imitation in an new environment with bumps and lighter weights
(so one is likely to use too much force). Agents trained with BC tend to act more aggressively and fail, whereas agents trained with centralized MAGAIL can adapt to the new environment. With 10 (imperfect) expert demonstrations, BC agents have a chance of failure of $39.8\%$ (with a reward of 1.26), while centralized MAGAIL agents fail only $26.2\%$ of the time (with a reward of 26.57). We show videos of respective policies in the supplementary.
\section{Related work and discussion}
There is a vast literature on single-agent imitation learning~\citep{bagnell2015invitation}. 
Behavior Cloning~(BC) learns the policy through supervised learning~\citep{pomerleau1991efficient}. Inverse Reinforcement Learning (IRL) assumes the expert policy optimizes over some unknown reward, recovers the reward, and learns the policy through reinforcement learning (RL). 
BC does not require knowledge of transition probabilities or access to the environment, but suffers from compounding errors and covariate shift~\citep{ross2010efficient,ross2011reduction}.

Most existing work in multi-agent imitation learning assumes the agents have very specific reward structures. The most common case is fully cooperative agents, where the challenges mainly lie in other factors, such as unknown role assignments~\citep{le2017coordinated}, scalability to swarm systems~\citep{vsovsic2016inverse} and agents with partial observations~\citep{bogert2014multi}. In non-cooperative settings, ~\citet{lin2014multi} consider the case of IRL for two-player zero-sum games and cast the IRL problem as Bayesian inference, while~\citet{reddy2012inverse} assume agents are non-cooperative but the reward function is a linear combination of pre-specified features. 

Our work is the first to propose a general multi-agent IRL framework that bridges the gap between state-of-the art multi-agent reinforcement learning methods~\citep{lowe2017multi,foerster2016learning} and implicit generative models such as generative adversarial networks~\citep{goodfellow2014generative}. 
Experimental results demonstrate that it is able to imitate complex behaviors in high-dimensional environments with both cooperative and adversarial interactions.
An interesting research direction is to explore new techniques for gathering expert demonstration; for example, when the expert is allowed to aid the agents by participating in part of the agent's learning process~\citep{hadfield2016cooperative}. 
    \bibliography{refs}
    \newpage
    \appendix
\section{Proofs}
We use $\vh_i(s)$, $\qh_i(s, a_i)$ and $Q(\tau)$ to represent $\vh_i(s; \piv, \rv)$, $\qh_i(s, a_i; \piv, \rv)$ and $Q(\tau; \piv, \rv)$, where we implicitly assume dependency over $\piv$ and $\rv$.

\subsection{Proof to Lemma~\ref{thm:td1}}

For any policy $\piv$, $f_\rv(\piv, \vh) = 0$ when $\vh$ is the value function of $\piv$ (due to Bellman equations). However, only policies that form a Nash equilibrium satisfies the constraints in Eq.~\ref{eq:marl}; we formalize this in the following Lemma.
\begin{lemma}\label{thm:td1}
Let $\vh_i(s; \piv, \rv)$ be the solution to the Bellman equation
$$
\vh_i(s) = \bb{E}_{\piv} [r_i(s, \av) + \gamma \sum_{s' \in \mc{S}} T(s'|s, \av) \vh_i(s')]
$$
and $\qh_i(s, a_i) = \bb{E}_{\pi_{-i}}[r_i(s, \av) + \gamma \sum_{s' \in \mc{S}} T(s'|s, \av) \vh_i(s')]$. Then for any $\pi$,  
$$f_\rv(\piv, \vh(\piv)) = 0$$ 
Furthermore, $\piv$ is Nash equilibrium under $r$ if and only if $\vh_i(s) \geq \qh_i(s, a_i)$ for all $i \in [N], s \in \mc{S}, a_i \in \mc{A}_i$.
\end{lemma}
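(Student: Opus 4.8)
The plan is to handle the two assertions separately: the identity $f_\rv(\piv,\vh)=0$ is a direct computation, while the Nash characterization reduces to the constrained-optimization characterization of equilibria already cited in the preliminaries (Prasad, Theorem~2.4).

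For the identity, I would start from the Bellman equation defining $\vh_i$ and use the convention from the preliminaries that the expectation $\bb{E}_\piv$ there is taken only over the next joint action $\av\sim\piv(\cdot\,|\,s)$. Since $\piv(\av|s)=\prod_j \pi_j(a_j|s)$ factorizes, I would split $\bb{E}_{\av\sim\piv}=\bb{E}_{a_i\sim\pi_i}\bb{E}_{a_{-i}\sim\pi_{-i}}$ and recognize the inner expectation over $a_{-i}$ as exactly $\qh_i(s,a_i)$. This gives, for every $i$ and $s$, the pointwise identity $\vh_i(s)=\bb{E}_{a_i\sim\pi_i(\cdot|s)}[\qh_i(s,a_i)]$. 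Summing over all $s\in\mc{S}$ and all $i\in[N]$ makes each summand of $f_\rv(\piv,\vh)$ vanish, so $f_\rv(\piv,\vh)=0$.

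For the equivalence, I would invoke the cited result: $\piv$ is a Nash equilibrium under $\rv$ if and only if there exists a $\vv$ that is feasible for the constraints of Equations~\ref{eq:marl} and~\ref{eq:q} and attains $f_\rv(\piv,\vv)=0$. The bridging step is a uniqueness observation. For a fixed $\piv$, feasibility forces each summand of $f_\rv$ to be nonnegative, because $v_i(s)\geq q_i(s,a_i)$ for all $a_i$ implies $v_i(s)\geq\bb{E}_{a_i\sim\pi_i}[q_i(s,a_i)]$. Hence a feasible $\vv$ with $f_\rv(\piv,\vv)=0$ must make every summand zero, i.e. $v_i(s)=\bb{E}_{a_i\sim\pi_i}[q_i(s,a_i)]$ for all $i,s$; expanding $q_i$ this is precisely the Bellman policy-evaluation equation, whose unique solution is $\vh$. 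Thus ``there exists a feasible $\vv$ with $f_\rv=0$'' is equivalent to ``$\vh$ itself is feasible,'' i.e. to $\vh_i(s)\geq\qh_i(s,a_i)$ for all $i,s,a_i$. Combined with the identity above and the cited characterization, this yields the stated ``if and only if.''

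The main obstacle I anticipate is exactly this uniqueness step connecting $f_\rv=0$ under feasibility back to the Bellman equation: it is what lets us replace the existential ``$\exists$ feasible $\vv$'' in the cited theorem with the concrete, policy-determined value function $\vh$. A secondary subtlety, for anyone wanting a self-contained argument that avoids the cited theorem, is that the per-state inequalities $\vh_i(s)\geq\qh_i(s,a_i)$ encode optimality of $\pi_i$ at every state in the single-agent MDP induced by fixing $\pi_{-i}$ (via the one-shot deviation principle and Bellman optimality), whereas the Nash condition as informally stated compares expected returns under the initial distribution $\eta$; reconciling both directions cleanly then requires either that $\eta$ has full support or the remark that a policy optimal at every state is a fortiori optimal from $\eta$. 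Routing through the cited characterization, which is already phrased with per-state constraints, sidesteps this bookkeeping entirely.
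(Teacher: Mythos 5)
Your first part (the identity $f_\rv(\piv,\vh)=0$) is exactly the paper's argument: factor $\bb{E}_{\piv}=\bb{E}_{\pi_i}\bb{E}_{\pi_{-i}}$ using the conditional independence of $a_i$ and $a_{-i}$ given $s$, recognize the inner expectation as $\qh_i(s,a_i)$, and conclude that every summand of $f_\rv$ vanishes. For the equivalence, however, you take a genuinely different route. The paper proves it directly and self-containedly: the ``only if'' direction by a deviation argument (a violated constraint at some $(s,a_i)$ lets agent $i$ profit by playing $a_i$ at $s$ and $\pi_i$ elsewhere, contradicting Nash), and the ``if'' direction by the observation that $\bb{E}_{\pi_i'}[\qh_i(s,a_i)]$ for any alternative $\pi_i'$ is a convex combination over actions, so the per-action constraints force $\pi_i$ to be a best response to $\pi_{-i}$. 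You instead reduce to the cited characterization (Prasad, Theorem 2.4) and supply the bridging uniqueness step: under feasibility, $f_\rv(\piv,\vv)=0$ forces every summand to vanish, which is precisely the policy-evaluation Bellman equation in $\vv$, whose unique (contraction) fixed point is $\vh$; hence the existential ``some feasible $\vv$ attains zero'' collapses to ``$\vh$ itself is feasible.'' This is legitimate, and the paper itself gestures at it in its closing remark (Theorem 3.8.2 of Filar and Vrieze, with the note that any such $v$ must equal $\vh$), but there it is a side comment rather than the proof. As for what each approach buys: the paper's route needs no external theorem, but its deviation arguments are informal and quietly equate per-state optimality with optimality under the initial distribution $\eta$ (deviating at a state unreachable under $\eta$ need not strictly increase the expected return appearing in the Nash definition) --- a gap your final paragraph correctly identifies; your route outsources exactly that bookkeeping to the cited result, at the cost of making the lemma depend on it, and leans on the uniqueness of the Bellman fixed point (standard, and already assumed in the lemma's phrasing ``the solution'') to perform the collapse.
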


\begin{proof}
 By definition of $\vh_i(s)$ we have:
\begin{align*}
    \vh_i(s) & = \bb{E}_{\piv} [r_i(s, \av) + \gamma \sum_{s' \in \mc{S}} T(s'|s, \av) \vh_i(s')] \\
    & = \bb{E}_{\pi_i} \bb{E}_{\pi_{-i}} [r_i(s, \av) + \gamma \sum_{s' \in \mc{S}} T(s'|s, \av) \vh_i(s')] \\
    & = \bb{E}_{\pi_i} [\qh_i(s, a_i)]
\end{align*}
which uses the fact that $a_i$ and $a_{-i}$ are independent conditioned on $s$. Hence $f_\rv(\piv, \vh) = 0$ immediately follows.

If $\piv$ is a Nash equilibrium, and at least one of the constrains does not hold, i.e. there exists some $i$ and $s, a_i$ such that $\vh(s) < \qh(s, a_i)$, then agent $i$ can achieve a strictly higher expected return if it chooses to take actions $a_i$ whenever it encounters state $s_i$ and follow $\pi_i$ for rest of the states, which violates the Nash equilibrium assumption.

If the constraints hold, i.e. for all $i$ and $(s, a_i)$, $\vh_i(s) \geq \qh_i(s, a_i)$ then
$$
\vh_i(s) \geq \bb{E}_{\pi_i}[\qh_i(s, a_i)] = \vh_i(s)
$$
so value iteration over $\vh_i(s)$ converges. If we can find another policy $\piv'$ such that $\vh_i(s) < \bb{E}_{\pi'_i}[\qh_i(s, a_i)]$, then there should be at least one violation in the constraints since $\pi'_i$ must be a convex combination (expectation) over actions $a_i$. Therefore, for any policy $\pi_i^\prime$ and action $a_i$ for any agent $i$, $\bb{E}_{\pi_i}[\qh_i(s, a_i)] \geq \bb{E}_{\pi_i^\prime}[\qh_i(s, a_i)]$ always hold, so $\pi_i$ is the optimal response to $\pi_{-i}$, and $\pi$ constitutes a Nash equilibrium when we repeat this argument for all agents.

Notably, Theorem 3.8.2 in~\cite{filar2012competitive} discusses the equivalence by assuming $f_\rv(\piv, v) = 0$ for some $v$; if $v$ satisfies the assumptions, then $v = \vh'$.
\end{proof}

\subsection{Proof to Theorem~\ref{thm:tdt}}
\begin{proof}
If $\pi$ is a Nash equilibrium, and at least one of the constraints does not hold, i.e. there exists some $i$ and $\{s^{(j)}, a_i^{(j)}\}_{j=0}^{t}$, such that
$$\vh_i(s^{(0)}) < \bb{E}_{\pi_{-i}}[\qh_i^{(t)}(\{s^{(j)}, \av^{(j)}\}_{j=0}^{t-1}, s^{(t)}, a^{(t)}_i)]$$
Then agent $i$ can achieve a strictly higher expected return on its own if it chooses a particular sequence of actions by taking $a_i^{(j)}$ whenever it encounters state $s^{(j)}$, and follow $\pi_i$ for the remaining states. We note that this is in expectation over the policy of other agents. Hence, we construct a policy for agent $i$ that has strictly higher value than $\pi_i$ without modifying $\pi_{-i}$, which contradicts the definition of Nash equilibrium.

If the constraints hold, i.e for all $i$ and $\{s^{(j)}, a_i^{(j)}\}_{j=0}^{t}$,
$$\vh_i(s^{(0)}) \geq \bb{E}_{\pi_{-i}}[\qh_i^{(t)}(\{s^{(j)}, \av^{(j)}\}_{j=0}^{t-1}, s^{(t)}, a^{(t)}_i)]$$
then we can construct any $\qh_i(s^{(0)}, a_i^{(0)})$ via a convex combination by taking the expectation over $\pi_i$:
$$\qh_i(s^{(0)}, a_i^{(0)}) = \bb{E}_{\pi_i}[\bb{E}_{\pi_{-i}}[\qh_i^{(t)}(\{s^{(j)}, \av^{(j)}\}_{j=0}^{t-1}, s^{(t)}, a^{(t)}_i)]]$$
where the expectation over $\pi_i$ is taken over actions $\{a_i^{(j)}\}_{j=0}^{t}$ (the expectation over states are contained in the inner expectation over $\pi_{-i}$). Therefore, $\forall i \in [N], s \in \mc{S}, a_i \in \mc{A}_i$,
$$
\vh_i(s) \geq \qh_i(s, a_i)
$$
and we recover the constraints in Eq.~\ref{eq:marl}. By Lemma~\ref{thm:td1}, $\pi$ is a Nash equilibrium.
\end{proof}

\subsection{Proof to Theorem~\ref{thm:dual}}
\label{app:dual}
\begin{proof}
We use $Q^\star, \qh^\star, \vh^\star$ to denote the $Q$, $\qh$ and $\vh$ quantities defined for policy $\piv^\star$.
For the two terms in $L_r^{(t+1)}(\piv^\star, \lambda_\piv^\star)$ we have:
\begin{align}
    L_r^{(t+1)}(\piv^\star, \lambda_\piv^\star) = \sum_{i = 1}^{N} \sum_{\tau_i \in \mc{T}_i} \lambda^\star(\tau_i) (Q_i^\star(\tau_i) - \vh_i^\star(s^{(0)})) \label{eq:gap}
\end{align}
For any agent $i$, we note that 
$$\sum_{\tau_i \in \mc{T}_i} \lambda^\star(\tau_i) Q_i^\star(\tau_i) = \bb{E}_{\pi_i} \bb{E}_{\pi^\star_{-i}} [\sum_{j=0}^{t-1} \gamma^j r_i(s^{(j)}, a^{(j)}) + \gamma^t \qh_i^\star(s^{t}, a_i^{(t)})]$$
which amounts to using $\pi_i$ for agent $i$ for the first $t$ steps and using $\pi^\star_i$ for the remaining steps, whereas other agents follow $\pi^\star_{-i}$. As $t \to \infty$, this converges to $\bb{E}_{\pi_i, \pi^\star_{-i}}[r_i]$ since $\gamma^t \to 0$ and $q^\star_i(s^{(t)}, a_i^{(t)})$ is bounded. Moreover, for $\vh^\star_i(s^{(0)})$, we have
$$
\sum_{\tau_i \in \mc{T}_i} \lambda^\star(\tau_i) \vh^\star_i(s^{(0)}) = \bb{E}_{s^{(0)} \sim \eta}[\vh^\star_i(s^{(0)})] = \bb{E}_{\piv^\star}[r_i]
$$
Combining the two we have $$L_r^{(t+1)}(\piv^\star, \lambda^\star_\piv) = \sum_{i=1}^{N} \bb{E}_{\pi_i, \pi^\star_{-i}}[r_i] - \sum_{i=1}^{N} \bb{E}_{\piv^\star}[r_i]$$
which describes the differences in expected rewards.
\end{proof}

\subsection{Proof to Theorem~\ref{thm:om}}
\begin{proof}
Define the ``MARL'' objective for a single agent $i$ where other agents have policy $\pi_{E_i}$:
$$
\mb{MARL}_i(r_i) = \max_{\pi_i} H_i(\pi_i) + \bb{E}_{\pi_i, \pi_{E_{-i}}}[r_i]
$$
Define the ``MAIRL'' objective for a single agent $i$ where other agents have policy $\pi_E$:
$$
\mb{MAIRL}_{i, \psi}(\pi^\star) = \argmax_{r_i} \psi_i(r_i) + \bb{E}_{\pi_E}[r_i] - (\max_{\pi_i} H_i(\pi_i) + \bb{E}_{\pi_i, \pi_{E_{-i}}}[r_i])
$$
Since $r_i$ and $\pi_i$'s are independent in the $\mb{MAIRL}$ objective, the solution to $\mb{MAIRL}_\psi$ can be represented by the solutions of $\mb{MAIRL}_{i, \psi}$ for each $i$:
$$
\mb{MAIRL}_\psi = [\mb{MAIRL}_{1, \psi}, \ldots, \mb{MAIRL}_{N, \psi}]
$$
Moreover, the single agent ``MARL'' objective $\mb{MARL}_i(r_i)$ has a unique solution $\pi_{E_i}$, which also composes the (unique) solution to $\mb{MARL}$ (which we assumed in Section~\ref{sec:mairl}. Therefore, 
$$
\mb{MARL}(\rv) = [\mb{MARL}_1(r_1), \ldots, \mb{MARL}_N(r_N)]
$$
So we can use Proposition 3.1 in~\cite{ho2016generative} for each agent $i$ with $\mb{MARL}_i(r_i)$ and $\mb{MAIRL}_{i, \psi}(\pi^\star)$ and achieve the same solution as $\mb{MARL} \circ \mb{MAIRL}_\psi$.
\end{proof}


\subsection{Proof to Proposition~\ref{thm:magail}}
\begin{proof}
From Corollary A.1.1 in~\cite{ho2016generative}, we have 
$$
\psi^\star_{GA} (\rho_\piv - \rho_{\piv_E}) = \max_{D \in (0, 1)^{\mc{S} \times \mc{A}}} \bb{E}_{\piv}[\log D(s, a)] + \bb{E}_{\piv_E} [\log (1 - D(s, a))] \equiv D_{JS}(\rho_\piv , \rho_{\piv_E})
$$
where $D_{JS}$ denotes Jensen-Shannon divergence (which is a squared metric), and $\equiv$ denotes equivalence up to shift and scaling.

Taking the min over this we obtain 
$$
\argmin_{\pi} \sum_{i=1}^{N} \psi^\star_{GA} (\rho_\pi - \rho_{\piv_E}) = \piv_E
$$
Similarly, 
$$
\argmin_{\pi} \sum_{i=1}^{N} \psi^\star_{GA} (\rho_{\pi_i, \pi_{E_{-i}}} - \rho_{\piv_E}) = \piv_E
$$
So these two quantities are equal.
\end{proof}
\section{MAGAIL Algorithm}
\label{app:magail}
We include the MAGAIL algorithm as follows:
\begin{algorithm}[H]
  \caption{Multi-Agent GAIL (MAGAIL)}
  \label{alg:magail}
\begin{algorithmic}
  \STATE {\bfseries Input:} Initial parameters of policies,  discriminators and value~(baseline) estimators $\theta_0, \omega_0$, $\phi_0$; expert trajectories $\mc{D} = \{(s_j, a_j)\}_{j=0}^{M}$; batch size $B$; Markov game as a black box $(N, \mc{S}, \mc{A}, \eta, T, r, \ov, \gamma)$.
  \STATE \textbf{Output: } Learned policies $\pi_\theta$ and reward functions $D_\omega$.
  \FOR{$u=0, 1, 2, \ldots $}
      \STATE Obtain trajectories of size $B$ from $\pi$ by the process: $s_0 \sim \eta(s), a_t \sim \pi_{\theta_u}(a_t \vert s_t), s_{t+1} \sim T(s_t \vert a_t)$.
      \STATE Sample state-action pairs from $\mc{D}$ with batch size $B$.
      \STATE Denote state-action pairs from $\pi$ and $\mc{D}$ as $\chi$ and $\chi_E$.
		\FOR{$i = 1, \ldots, n$}
      \STATE Update $\omega_{i}$ to increase the objective
      \vspace{-0.5em}$$\mathbb{E}_{\chi}[\log D_{\omega_i}(s, a_i)] + \mathbb{E}_{\chi_E}[\log (1 - D_{\omega_i}(s, a_i))]$$
      \ENDFOR
      \FOR{$i = 1, \ldots, n$}
      \STATE Compute value estimate $V^\star$ and advantage estimate $A_i$ for $(s, a) \in \chi$.
      \STATE Update $\phi_i$ to decrease the objective
      \vspace{-0.5em}$$\mathbb{E}_{\chi}[(V_\phi(s, a_{-i}) - V^\star(s, a_{-i}))^2] $$
      \STATE Update $\theta_i$ by policy gradient with small step sizes:
      \vspace{-0.5em}$$\mathbb{E}_{\chi}[\nabla_{\theta_i} \pi_{\theta_i}(a_i \vert o_i) A_i(s, a)]$$
      \ENDFOR
  \ENDFOR
\end{algorithmic}
\end{algorithm}

\newpage
\clearpage

\section{Experiment Details}
\label{app:exps}
\subsection{Hyperparameters}
For the particle environment, we use two layer MLPs with 128 cells in each layer, for the policy generator network, value network and the discriminator. We use a batch size of 1000. The policy is trained using K-FAC optimizer~\citep{martens2015optimizing} with learning rate of 0.1. All other parameters for K-FAC optimizer are the same in~\citep{wu2017scalable}.

For the cooperative control task, we use two layer MLPs with 64 cells in each layer for all the networks. We use a batch size of 2048, and learning rate of 0.03. We obtain expert trajectories by training the expert with MACK and sampling demonstrations from the same environment. Hence, the expert's demonstrations are imperfect (or even flawed) in the environment that we test on.

\begin{table}
\centering
\caption{Performance in cooperative navigation.}
\begin{tabular}{c|cccc}
\hline
\# Expert Episodes & 100 & 200 & 300 & 400 \\
\hline
Expert & \multicolumn{4}{c}{-13.50 $\pm$ 6.3} \\
Random & \multicolumn{4}{c}{-128.13 $\pm$ 32.1} \\\hline
Behavior Cloning & -56.82 $\pm$ 18.9 & -43.10 $\pm$ 16.0 & -35.66 $\pm$ 15.2 & -25.83 $\pm$ 12.7 \\
Centralized & \textbf{-46.66} $\pm$ 20.8 & \textbf{-23.10} $\pm$ 12.4 & \textbf{-21.53} $\pm$ 12.9 & \textbf{-15.30} $\pm$ 7.0 \\
Decentralized & -50.00 $\pm$ 18.6 & -25.61 $\pm$ 12.3 & -24.10 $\pm$ 13.3 & -15.55 $\pm$ 6.5 \\
GAIL & -55.01 $\pm$ 17.7 & -39.21 $\pm$ 16.5 & -29.89 $\pm$ 13.5 & -18.76 $\pm$ 12.1 \\
\hline
\end{tabular}
\label{tab:nav}
\end{table}

\begin{table}
\centering
\caption{Performance in cooperative communication.}
\begin{tabular}{c|cccc}
\hline
\# Expert Episodes & 100 & 200 & 300 & 400 \\
\hline
Expert & \multicolumn{4}{c}{-6.22 $\pm$ 4.5 } \\
Random & \multicolumn{4}{c}{-62.49 $\pm$ 28.7} \\\hline
Behavior Cloning & -21.25 $\pm$ 10.6 & -13.25 $\pm$ 7.4 & -11.37 $\pm$ 5.9 & -10.00 $\pm$ 5.36 \\
Centralized & \textbf{-15.65} $\pm$ 10.0 &\textbf{ -7.11} $\pm$ 4.8 & \textbf{-7.11} $\pm$ 4.8 & \textbf{-7.09} $\pm$ 4.8 \\
Decentralized & -18.68 $\pm$ 10.4 & -8.06 $\pm$ 5.3 & -8.16 $\pm$ 5.5 & -7.34 $\pm$ 4.9 \\
GAIL & -20.28 $\pm$ 10.1 & -11.06 $\pm$ 7.8 & -10.51 $\pm$ 6.6 & -9.44 $\pm$ 5.7 \\
\hline
\end{tabular}
\label{tab:comm}
\end{table}


\subsection{Detailed Results}
We use the particle environment introduced in~\citep{lowe2017multi} and the multi-agent control environment~\citep{madrl} for experiments. We list the exact performance in Tables~\ref{tab:nav},~\ref{tab:comm} for cooperative tasks, and Table~\ref{tab:adv}  and competitive tasks. The means and standard deviations are computed over 100 episodes. The policies in the cooperative tasks are trained with varying number of expert demonstrations. The policies in the competitive tasks are trained with on a dataset with 100 expert trajectories.

The environment for each episode is drastically different (e.g. location of landmarks are randomly sampled), which leads to the seemingly high standrad deviation across episodes. 

\begin{table}
\centering
\caption{Performance in competitive tasks.}
\label{tab:adv}
\begin{tabular}{c|ccc}
\hline
Task                           & \multicolumn{1}{c}{Agent Policy} & \multicolumn{1}{c}{Adversary Policy} & Agent Reward \\ \hline
\multirow{7}{*}{Predator-Prey}     & \multirow{4}{*}{Behavior Cloning}                                & Behavior Cloning                                    & -93.20 $\pm$ 63.7 \\  
&                                 & GAIL                           & -93.71 $\pm$ 64.2 \\
                               &                                 & Centralized                           & -93.75 $\pm$ 61.9 \\  
                               &                                 & Decentralized                         & -95.22 $\pm$ 49.7 \\  
                               &                                & Zero-Sum                              & \textbf{-95.48} $\pm$ 50.4 \\ \cline{2-4} 
                               & GAIL                       & \multirow{4}{*}{Behavior Cloning}                                    & -90.55 $\pm$ 63.7 \\  
                               & Centralized                     &                                   & -91.36 $\pm$ 58.7 \\  
                               & Decentralized                     &                                   & \textbf{-85.00} $\pm$ 42.3 \\  
                               & Zero-Sum                          &                                     & -89.4 $\pm$ 48.2  \\ \hline
\multirow{7}{*}{Keep-Away} & \multirow{4}{*}{Behavior Cloning}                                 &         Behavior Cloning                             & 24.22 $\pm$ 21.1           \\  
                               &                                 & GAIL                                     & 24.04 $\pm$ 18.2           \\ 
                               &                                 & Centralized                                     & 23.28 $\pm$ 20.6           \\  
                               &                                 & Decentralized                                     & 23.56 $\pm$ 19.9           \\  
                               &                                 & Zero-Sum                                     & \textbf{23.19} $\pm$ 19.9           \\ \cline{2-4} 
                               & GAIL                                & \multirow{4}{*}{Behavior Cloning}                                     & 26.22 $\pm$ 19.1           \\  
                                                              & Centralized                                &                                      & 26.61 $\pm$ 20.0           \\  
                               & Decentralized                                &                                      & \textbf{28.73} $\pm$ 18.3           \\  
                               & Zero-Sum                                &                                      & 27.80 $\pm$ 19.2           \\ \hline
\end{tabular}
\end{table}

\begin{figure}
\centering
\includegraphics[width=0.9\textwidth]{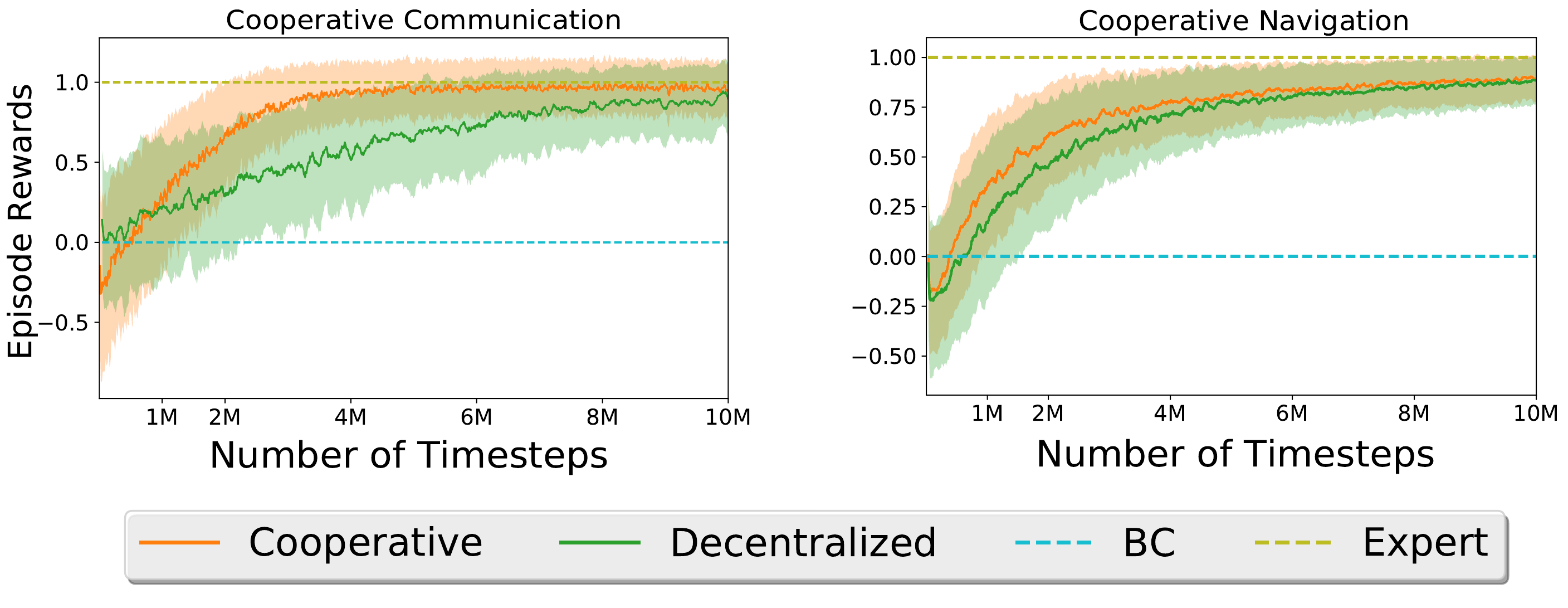}
\caption{Sample complexity of multi-agent GAIL methods under cooperative tasks. Performance of experts is normalized to one, and performance of behavior cloning is normalized to zero. The standard deviation is computed with respect to episodes, and is noisy due to randomness in the environment.}
\label{fig:coop}
\end{figure}

\subsection{Video Demonstrations}
We show certain trajectories generated by our methods. The vidoes are here: \href{https://drive.google.com/open?id=1Oz4ezMaKiIsPUKtCEOb6YoHJ9jLk6zbj}{videos}\footnote{\href{https://drive.google.com/open?id=1Oz4ezMaKiIsPUKtCEOb6YoHJ9jLk6zbj}{https://drive.google.com/open?id=1Oz4ezMaKiIsPUKtCEOb6YoHJ9jLk6zbj}}.

For the particle case:
\begin{description}
\item[Navigation-BC-Agents.gif] Agents trained by behavior cloning in the navigation task.
\item[Navigation-GAIL-Agents.gif] Agents trained by proposed framework in the navigation task.
\item[Predator-Prey-BC-Agent-BC-Adversary.gif] Agent (green) trained by behavior cloning play against adversaries (red) trained by behavior cloning.
\item[Predator-Prey-GAIL-Agent-BC-Adversary.gif] Agent (green) trained by proposed framework play against adversaries (red) trained by behavior cloning.
\end{description}

For the cooperative control case:
\begin{description}
\item[Multi-Walker-Expert.mp4] Expert demonstrations in the ``easy'' environment.
\item[Multi-Walker-GAIL.mp4] Centralized GAIL trained on the ``hard'' environment.
\item[Multi-Walker-BC.mp4] BC trained on the ``hard'' environment.
\end{description}

Interestingly, the failure modes for the agents in ``hard'' environment is mostly having the plank fall off or bounce off, since by decreasing the weight of the plank will decrease its friction force and increase its acceleration.


\end{document}